\documentclass{article}

\usepackage{microtype}
\usepackage{graphicx}
\usepackage{subcaption}
\usepackage{booktabs} 
\usepackage{multirow}

\usepackage{hyperref}

\usepackage{icml2026}
\makeatletter
\renewcommand{\Notice@String}{}
\icmlshowauthorstrue
\makeatother

\usepackage{amsmath}
\usepackage{amssymb}
\usepackage{mathtools}
\usepackage{amsthm}

\usepackage[capitalize,noabbrev]{cleveref}

\theoremstyle{plain}
\newtheorem{theorem}{Theorem}[section]

\newtheorem{lemma}[theorem]{Lemma}

\theoremstyle{definition}
\newtheorem{definition}[theorem]{Definition}
\newtheorem{assumption}[theorem]{Assumption}
\theoremstyle{remark}

\usepackage[textsize=tiny]{todonotes}

\icmltitlerunning{ASPECT:Analogical Semantic Policy Execution via Language Conditioned Transfer}

\begin{document}

\twocolumn[
  \icmltitle{ASPECT:Analogical Semantic Policy Execution via Language Conditioned Transfer}



  \icmlsetsymbol{equal}{*}

  \begin{icmlauthorlist}
    \icmlauthor{Ajsal Shereef Palattuparambil}{a3i}
    \icmlauthor{Thommen George Karimpanal}{sit}
    \icmlauthor{Santu Rana}{a3i}
  \end{icmlauthorlist}

  \icmlaffiliation{a3i}{Applied Artificial Intelligence Initiative, Deakin University, Waurn Ponds, Geelong}
  \icmlaffiliation{sit}{School of IT, Deakin University, Waurn Ponds, Geelong}

  \icmlcorrespondingauthor{Ajsal Shereef Palattuparambil}{a.palattuparambil@deakin.edu.au}

  \icmlkeywords{Reinforcement Learning, Zero-shot Transfer, Analogical Reasoning, Language-Conditioned Policies}

  \vskip 0.3in
]



\printAffiliationsAndNotice{}  

\begin{abstract}
Reinforcement Learning (RL) agents often struggle to generalize knowledge to new tasks, even those structurally similar to ones they have mastered. Although recent approaches have attempted to mitigate this issue via zero-shot transfer, they are often constrained by predefined, discrete class systems, limiting their adaptability to novel or compositional task variations. We propose a significantly more generalized approach, replacing discrete latent variables with natural language conditioning via a text-conditioned Variational Autoencoder (VAE). Our core innovation utilizes a Large Language Model (LLM) as a dynamic \textit{semantic operator} at test time. Rather than relying on rigid rules, our agent queries the LLM to semantically remap the description of the current observation to align with the source task. This source-aligned caption conditions the VAE to generate an imagined state compatible with the agent's original training, enabling direct policy reuse. By harnessing the flexible reasoning capabilities of LLMs, our approach achieves zero-shot transfer across a broad spectrum of complex and truly novel analogous tasks, moving beyond the limitations of fixed category mappings. Code and videos are available \href{https://anonymous.4open.science/r/ASPECT-85C3/}{here}.
\end{abstract}

\section{Introduction}

Humans possess a remarkable capacity for analogical reasoning, enabling us to adapt existing knowledge to new, structurally similar tasks with minimal/zero relearning. If we learn to pick an apple, we can intuitively apply that same skill to pick an orange without extensive trial and error. This fluid transfer of knowledge remains a fundamental challenge for Deep Reinforcement Learning agents, which typically require costly retraining even when faced with minor changes to a task's goals or semantics \cite{zhang2018dissection}.

A promising direction for closing this gap is to equip agents with a mechanism for ``imagination'' \cite{hafner2019dream, nair2018visual}. Recent work MAGIK \cite{palattuparambil2025magik} demonstrated a novel framework for zero-shot knowledge transfer by training an agent to imagine analogous goals. It learns to disentangle an observation's task-agnostic features (e.g., an object's position) from its task-specific features (e.g., its identity as an ``apple'') into a discrete class latent $c$. By programmatically swapping the latent class of a target object (e.g., ``orange'') with that of a source object (``apple''), the authors demonstrated how an agent can generate a source-aligned observation and reuse its original policy.

However, relying on a predefined, discrete set of classes fundamentally constrains the agent's analogical capabilities. A critical limitation of this framework is its inability to extrapolate to unseen tasks that share the same action affordance as the source. For instance, while prior work can map a known ``orange'' to an ``apple,'' it fails to transfer the policy for picking an ``apple'' to a previously \textit{unseen} ``banana.'' This rigidity severely limits deployment in real-world environments where novel objects are ubiquitous. Such scenarios demand a semantic understanding that extends beyond fixed classification systems. To overcome this bottleneck, we introduce \textbf{A}nalogical \textbf{S}emantic \textbf{P}olicy \textbf{E}xecution via Language \textbf{C}onditioned \textbf{T}ransfer (ASPECT), a method grounded in natural language. By leveraging the rich semantic context provided by language, our framework can map a known source policy to \textit{any} target task, including those involving unseen objects, provided they share the same core action affordance. This significantly improves the generalisability of RL policies, allowing them to adapt to open-world settings without costly retraining.

Specifically, we generalize the imagination-based transfer framework by replacing the discrete class latent $c$ with a continuous latent space conditioned on natural language. Rather than training a VAE on predefined classes, we utilize a text-conditioned VAE trained on image-text pairs. This approach enables the model to disentangle structural features $z$ from high-dimensional semantic content derived from textual descriptions.

Our approach uses a Large Language Model (LLM) as a dynamic semantic operator at test time. In contrast to the original MAGIK framework, which relied on researcher-defined, fixed transformations (e.g., ``map red ball to green ball''), our method leverages the LLM's reasoning capabilities to autonomously determine semantic remappings. By prompting the LLM with a description of the current observation, the agent identifies how to align the novel target task with its existing knowledge base. For instance, the LLM might deduce that, within the context of the agent's goal, a target ``orange'' is semantically analogous to a source ``apple.''

This source-aligned description conditions the generative model, allowing the agent to reconstruct the current state by combining the latent feature $z$ of the current state with the LLM-provided remapped captions. Effectively, the agent ``reimagines'' the novel target scene in the familiar terms of its source task, enabling direct, zero-shot application of the original policy. Integrating generative imagination with the flexible reasoning of LLMs enables our approach to transcend simple one-to-one mappings, facilitating complex, compositional, and truly novel analogical knowledge transfer.

Our contributions are as follows:
\begin{itemize}
    \item We propose a novel framework that leverages a text-conditioned VAE and an LLM to achieve flexible, zero-shot policy transfer.
    \item We introduce the concept of an LLM as a "semantic operator" to dynamically map target observations to source-task analogues, replacing the rigid, discrete class system of prior work.
    \item We demonstrate that our approach can generalize to a significantly wider and more complex range of semantic tasks, including those involving unseen objects.
\end{itemize}

\section{Related Work}

Our work, ASPECT, is positioned at the intersection of several key research areas in reinforcement learning, including transfer learning, zero-shot generalization, relational reasoning, and imagination-based policy execution.

\subsection{Transfer Learning and Domain Adaptation in RL}
The challenge of transferring knowledge across tasks is a long-standing problem in reinforcement learning. Prominent approaches include Successor Features (SFs) \cite{barreto2017successor,chua2024learning}, which learn representations that decouple environment dynamics from reward functions to accelerate transfer between tasks with different goals. However, these and other traditional transfer learning methods still require a phase of online interaction and fine-tuning, especially when task structures or dynamics change. 

A related field, domain adaptation, attempts to learn policies that are robust to shifts in observations, often by learning domain-invariant features \cite{gamrian2019transfer}. While effective, these methods typically require access to data from the target domain, limiting transferability, and do not explicitly model the semantic analogies between task components.





\subsection{Imagination, Planning, and World Models}
The concept of ``imagination'' is most prominently featured in model-based RL. Works like Dreamer \cite{hafner2019dream} learn a latent-space world model and then train a policy by imagining future trajectories entirely within this learned model, leading to high sample efficiency. Other approaches have used imagination to generate and select goals \cite{nair2018visual}. 

The imagination scheme used by \cite{palattuparambil2025magik}, and adopted by our approach, is distinct. It does not imagine future states based on environment dynamics; instead, it ``imagines'' a translation, an analogical mapping between the components of two different environments. Our method generalizes this by conditioning the translation on natural language, rather than the confined, predefined discrete latents of the original work.

\subsection{Relational and Analogical Reasoning}
The method is fundamentally inspired by human analogical reasoning. This has been most closely studied in Relational Reinforcement Learning (RRL) \cite{zambaldi2018relational}, which aims to learn policies that operate over objects and their relations, rather than raw pixel data. This relational structure is a powerful prior for generalization. Our work differs by focusing on the transfer problem: rather than learning a single, general relational policy from scratch, we assume a high-performing policy exists in a source domain and focus on translating a new task into the source policy's language.

\section{Preliminaries and Background}

This section provides an overview of the core concepts that form the foundation of our work: Reinforcement Learning (RL), the training of a source policy, Variational Autoencoders (VAEs), and the principles of language-conditioned generative models.

\subsection{Reinforcement Learning (RL)}
We formulate our problem within the Reinforcement Learning (RL) framework. An RL environment is typically modeled as a Markov Decision Process (MDP), defined by the tuple $(\mathcal{S}, \mathcal{A}, \mathcal{T}, \mathcal{R}, \gamma)$. Here, $\mathcal{S}$ is the space of all possible states, $\mathcal{A}$ is the set of actions, $\mathcal{T}(s'|s, a)$ is the state transition function defining the probability of transitioning to state $s'$ from state $s$ after taking action $a$, $\mathcal{R}(s, a)$ is the reward function, and $\gamma \in [0, 1]$ is the discount factor.

The goal of an RL agent is to learn a policy, $\pi(a|s)$, which is a mapping from states to a distribution over actions. The policy is optimized to maximize the expected cumulative discounted return $G_t = \mathbb{E}[\sum_{k=0}^{\infty} \gamma^k R_{t+k+1}]$, which represents the total accumulated reward over time.

\subsection{Source Policy Learning}
In our framework, the agent's source policy $\pi_{source}$ is trained in the source environment to solve a specific task, learning to maximize the expected cumulative reward $G_t$. The core of our methodology is agnostic to the specific algorithm used to train $\pi_{source}$. Any standard deep RL algorithm can be employed.

To demonstrate this flexibility, our experiments utilize a variety of prominent algorithms, including on-policy methods like Proximal Policy Optimization (PPO) \cite{schulman2017proximal} and off-policy methods like Deep Q-Networks (DQN) \cite{mnih2015human} (for discrete actions) and Soft Actor-Critic (SAC) \cite{haarnoja2018soft} (for continuous actions).

\subsection{Language-Conditioned Generative Models}
Generative models, such as Variational Autoencoders (VAEs) \cite{kingma2013auto}, learn a compressed, probabilistic latent representation that captures the underlying structure of data. While traditional VAEs generate data from this latent space alone, conditional variants allow for control over the generative process through auxiliary information.

In the context of language conditioning, modern approaches leverage rich embeddings from pre-trained language models (like CLIP \cite{radford2021learning}) to condition generation on free-form text descriptions. This text conditioning is typically integrated into the model architecture using mechanisms like FiLM (Feature-wise Linear Modulation) \cite{perez2018film} or Cross-Attention \cite{vaswani2017attention}, enabling fine-grained semantic control over the synthesized output. Our work builds upon these foundations to enable language-guided imagination.

\subsection{LLMs as Reasoning Engines}
LLMs \cite{brown2020language} are Transformer-based models trained on web-scale text corpora. While LLMs excel at text generation, their most significant capability for our work is their emergent capacity for in-context learning and zero-shot reasoning. Given a prompt containing a few examples or a structured set of instructions (a ``context''), LLMs can perform novel tasks without any gradient updates. This allows them to function as flexible ``semantic operators,'' capable of translating concepts, performing analogical reasoning, and rephrasing information from one domain to another based on the provided context. We leverage this capability to build our mapping function $M_{LLM}$ (More details in Section \ref{sec:llm_operator}).

\section{Methodology}
\label{sec:methodology}
Our objective is to enable an RL agent, pre-trained on a source task, to perform zero-shot transfer to novel, analogous target tasks (formally defined in Definition \ref{def:analogous_tasks}) using an imagination-based mechanism guided by natural language. Building on prior work \cite{palattuparambil2025magik}, which used a discrete class system, we introduce a more flexible framework centered around a text-conditioned VAE and an LLM for semantic reasoning.

Let $\pi_{source}(a|s)$ denote the policy learned by an RL agent (e.g., using SAC \cite{haarnoja2018soft}) for a source task, operating on states $s$ from the source environment $S^S$. Our goal is to derive an effective policy $\pi_{target}(a|s_t)$ for a target task in environment $S^T$, where $s_t \in S^T$, without any direct interaction or training in the target environment. We achieve this by learning a sophisticated imagination function $\Psi_{LLM}(s_t, c_{t \to s})$ that maps a target state $s_t$ and the LLM-generated source-aligned caption to an imagined source-aligned state $s_{imagined}$. The target policy is then defined as $\pi_{target}(a|s_t) = \pi_{source}(a | \Psi_{LLM}(s_t, c_{t \to s}))$. See Section \ref{sec:llm_operator} for the definition of $c_{t \to s}$. In this context, the term ``zero-shot'' refers to the fact that our method is applied directly to the target task without any re-interaction with the environment and without requiring any additional data collection or training in the target setting.

The core components of our methodology are: (1) a text-conditioned VAE trained to reconstruct states based on semantic descriptions, and (2) an LLM acting as a semantic operator to translate target task descriptions into source task analogues at test time.

\subsection{Text-Conditioned VAE for Imagination}

We employ a VAE architecture modified to condition the generative process on natural language descriptions. This VAE is trained offline on a dataset $\mathcal{D} = \{(x_i, c_{s,i})\}_{i=1}^N$ consisting of observations $x_i$ (e.g., image frames) paired with corresponding source captions $c_{s,i}$ describing the scene content. These pairs are collected during the agent's interaction with the source environment while learning $\pi_{source}$. Note that $c_{s,i}$ describes the scene itself and does not inherently contain information about the agent's task.

\subsubsection{Architecture and Latent Space}
The VAE consists of an encoder network $q_{\phi}(z|x)$ and a decoder network $p_{\theta}(x|z, c_s)$. The encoder maps an input observation $x$ to a distribution over a continuous latent variable $z \in \mathbb{R}^k$. The decoder reconstructs the observation $\hat{x}$ by sampling from $z$ and using the provided source caption $c_s$ as guidance for the scene's semantic content.

Specifically, the source caption $c_s$ is first embedded into a continuous vector $e_{c_s} = E_{text}(c_s)$ using a pre-trained text embedding model, such as the text encoder from CLIP (e.g., `clip-vit-base-patch32'). This text embedding $e_{c_s}$ then conditions the decoding process $p_{\theta}(x|z, e_{c_s})$. We implement this conditioning using a combination of Feature-wise Linear Modulation (FiLM) and Cross-Attention layers within the decoder architecture. FiLM layers modulate the intermediate feature maps of the decoder based on the latent code $z$, while Cross-Attention layers allow the decoder features to attend to the text embedding $e_{c_s}$, integrating the semantic guidance provided by the caption into the reconstruction process. This ensures that the generated image $\hat{x}$ reflects both the structural information from $z$ and the semantic content specified by $c_s$.

\subsubsection{Training Objective and Disentanglement}
The VAE is trained to maximize a modified Evidence Lower Bound (ELBO) objective, incorporating both pixel-wise Mean Squared Error and perceptual loss for high-fidelity reconstruction, regularized by a $\beta$-weighted KL-divergence term \cite{higgins2017beta}. Crucially, to ensure effective zero-shot transfer, we strictly disentangle the spatial latent $z$ from the semantic text embedding $e_{c_s}$. We employ an adversarial training scheme with a Gradient Reversal Layer (GRL) \cite{ganin2016domain}, where a discriminator $D_{\psi}$ learns to predict $e_{c_s}$ from $z$ via a contrastive InfoNCE loss. The encoder is simultaneously updated to maximize this loss, guarding against semantic leakage and ensuring $z$ captures only structural features orthogonal to the text. The complete training objective is detailed in Appendix \ref{app:vae_details}, and we discuss the disentanglement in Appendix \ref{app:disentanglement}.

\begin{figure}[hbt]
    \centering
    \includegraphics[width=0.8\linewidth]{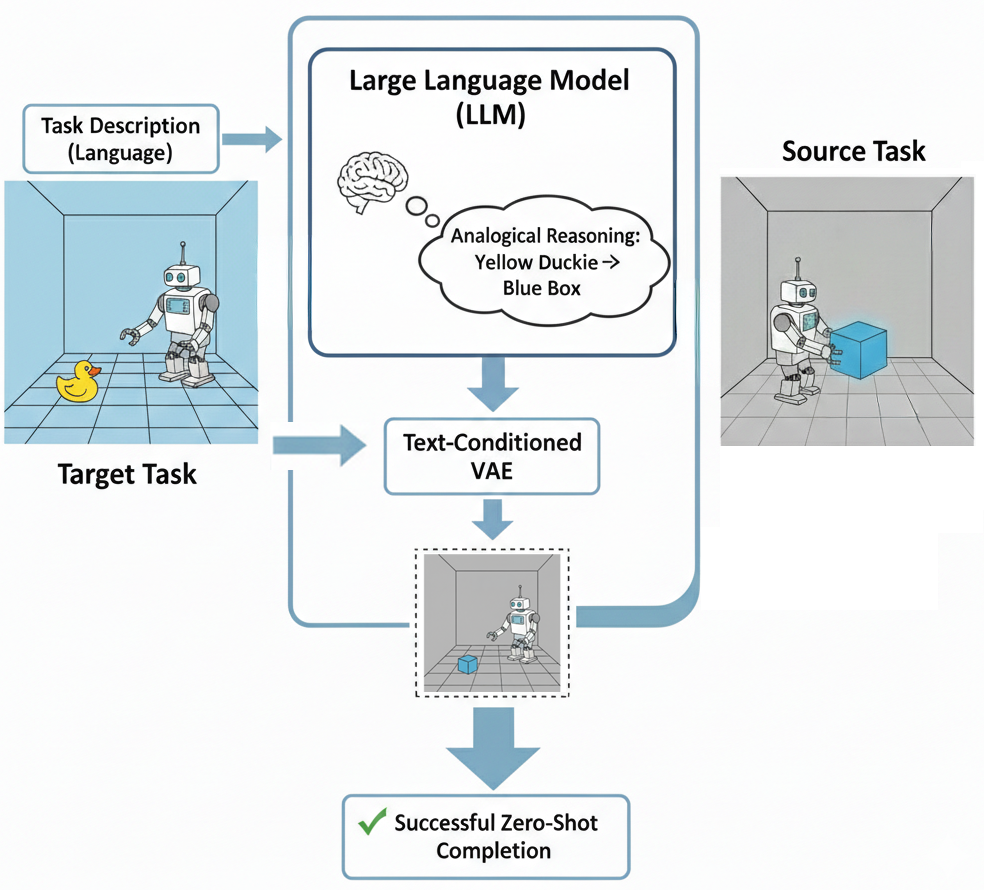}
    \caption{The overall idea of ASPECT. The agent uses an LLM to semantically remap the target observation (e.g., target task: ``pick yellow duckie from blue room'') to a source-aligned description (e.g., source task: ``pick blue box from grey room''), enabling the direct application of the pre-trained source policy to the target task.}
    \label{fig:idea}
\end{figure}

\subsection{LLM as a Semantic Operator for Analogical Mapping}
\label{sec:llm_operator}
The key component enabling flexible, zero-shot transfer is the use of an LLM as a semantic operator, $M_{LLM}$. At test time, when the agent encounters a state $s_t$ in the target task, we first obtain a caption $c_t$ of this state. This caption can be generated by a Vision-Language Model (VLM), provided externally, or follow a template based on object detectors.

The function of the LLM is to translate this target caption $c_t$ into a source-aligned caption $c_{t \to s}$. The LLM manipulates the caption $c_t$ based on the source and target task goals, effectively finding a semantic analogy that aligns the current target situation with the agent's prior experience from the source task. This process can be formulated as \(c_{t \to s} = M_{LLM}(c_t, \mathcal{C})\),
where $\mathcal{C}$ represents the context provided to the LLM used to retrieve $c_{t \to s}$ from $c_t$. This context comprises a description of the environment, the target task (e.g., ``Pick yellow duckie''), the source task the agent was trained on (e.g., ``Pick blue box''), and the caption $c_t$ of the current observation (e.g., ``A yellow duckie is on the table'').
Along with this context, a directive query is provided (see Appendix \ref{app:llm_details}). The LLM then outputs the manipulated, source-aligned caption $c_{t \to s}$ (e.g., ``A blue box is on the table'').

This querying leverages the LLM's ability to perform analogical reasoning on-the-fly. Significantly, this approach allows the agent to potentially handle target tasks involving objects or concepts (like the ``yellow duckie'') that were entirely absent during the VAE training, relying on the LLM's general semantic understanding to bridge the gap. The complete zero-shot transfer procedure is summarized in Algorithm \ref{alg:llm_magik} in Appendix \ref{app:algorithm}. The overall idea of our method is illustrated in (Figure \ref{fig:idea}).

\section{Theoretical Analysis of Analogical Transfer}

In this section, we formally characterize the analogical transfer mechanism of ASPECT.
We define analogous tasks using MDP homomorphisms, prove the existence of an ideal
source-aligned state, and derive a performance degradation bound under approximate
semantic mapping.

\subsection{Preliminaries}

Let the source and target tasks be modeled as Markov Decision Processes (MDPs):

\[
\mathcal{M}_S = (S_S, A, T_S, R_S, \gamma)
\]
\[
\mathcal{M}_T = (S_T, A, T_T, R_T, \gamma)
\]

where:

\begin{itemize}
    \item $S_S$ and $S_T$ are the source and target state spaces,
    \item $A$ is the shared action space,
    \item $T_S(s' \mid s,a)$ and $T_T(s' \mid s,a)$ are transition kernels,
    \item $R_S(s,a)$ and $R_T(s,a)$ are reward functions,
    \item $\gamma \in (0,1)$ is the discount factor.
\end{itemize}

\paragraph{State Factorization.}

We assume that states admit a structural–semantic decomposition:

\[
s = (u,v)
\]

where:

\begin{itemize}
    \item $u \in \mathcal{U}$ denotes structural features
          (e.g., spatial configuration),
    \item $v \in \mathcal{V}$ denotes semantic identity
          (e.g., object class or affordance role).
\end{itemize}

Thus:

\[
S_S \subseteq \mathcal{U} \times \mathcal{V}_S,
\quad
S_T \subseteq \mathcal{U} \times \mathcal{V}_T.
\]

\subsection{Analogous Tasks via Affordance-Preserving Mapping}

\begin{definition}[Affordance-Preserving Mapping]
A mapping
\[
\Phi : \mathcal{V}_T \rightarrow \mathcal{V}_S
\]
is called affordance-preserving if for all
$u,u' \in \mathcal{U}$,
$v,v' \in \mathcal{V}_T$,
and $a \in A$:

\[
R_T((u,v),a) = R_S((u,\Phi(v)),a),
\]

\[
T_T((u',v') \mid (u,v),a)
=
T_S((u',\Phi(v')) \mid (u,\Phi(v)),a).
\]

\end{definition}

\begin{definition}[Analogous Tasks]
\label{def:analogous_tasks}
The target MDP $\mathcal{M}_T$ is analogous to
the source MDP $\mathcal{M}_S$ if there exists
an affordance-preserving mapping $\Phi$.
\end{definition}

\subsection{MDP Homomorphism}

Define the state mapping

\[
h : S_T \rightarrow S_S,
\quad
h(u,v) = (u, \Phi(v)).
\]

\begin{definition}[Exact Homomorphism]
The mapping $h$ defines an exact MDP homomorphism
if for all $s \in S_T$, $a \in A$:

\[
R_T(s,a) = R_S(h(s),a),
\]

\[
T_{T}(s^{\prime}|s,a)=T_{S}(h(s^{\prime})|h(s),a).
\]
\end{definition}

Under the structural–semantic factorization above,
this reduces to direct substitution via $\Phi$.

\subsection{Existence of Ideal Source-Aligned State}

\begin{lemma}[Existence of Ideal Aligned State]
If $\mathcal{M}_T$ is analogous to $\mathcal{M}_S$,
then for every $s_T \in S_T$,
there exists a corresponding
\[
s_{\text{ideal}} = h(s_T) \in S_S.
\]
\end{lemma}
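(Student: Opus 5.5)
The plan is a direct unfolding of the definitions, plus one real check: that $h(s_T)$ actually lands in $S_S$ and not merely in $\mathcal{U}\times\mathcal{V}_S$.

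\emph{Step 1: construct the candidate.} Fix $s_T \in S_T$. By the state factorization, $S_T \subseteq \mathcal{U}\times\mathcal{V}_T$, so we can write $s_T=(u,v)$ with $u\in\mathcal{U}$ and $v\in\mathcal{V}_T$. Since $\mathcal{M}_T$ is analogous to $\mathcal{M}_S$, Definition~\ref{def:analogous_tasks} gives an affordance-preserving map $\Phi:\mathcal{V}_T\to\mathcal{V}_S$. We then set $s_{\text{ideal}} := h(s_T) = (u,\Phi(v))$. This is well defined because $\Phi$ is a function on all of $\mathcal{V}_T$, so $s_{\text{ideal}}\in\mathcal{U}\times\mathcal{V}_S$.

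\emph{Step 2: show membership in $S_S$ (the main obstacle).} The factorization only states $S_S\subseteq \mathcal{U}\times\mathcal{V}_S$, so membership is not automatic. The approach is to read the affordance-preserving conditions as implicitly requiring $R_S$ and $T_S$ to be defined at $(u,\Phi(v))$ for every $(u,v)\in S_T$. The reward identity $R_T((u,v),a)=R_S((u,\Phi(v)),a)$ only makes sense if $(u,\Phi(v))$ lies in the domain of $R_S$, which is $S_S\times A$. Hence $(u,\Phi(v))\in S_S$.

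If one prefers not to lean on this domain convention, there are two fallbacks. One can state explicitly that $\Phi$ maps into semantic values realized together with every structural configuration, i.e.\ $\mathcal{U}\times\Phi(\mathcal{V}_T)\subseteq S_S$. Alternatively, one can take $S_S=\mathcal{U}\times\mathcal{V}_S$, which is the situation in the VAE setting, where any $z$ can be decoded with any caption. Either way the lemma follows immediately.

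\emph{Step 3: record the homomorphism property.} Finally, note that $s_{\text{ideal}}$ is ``ideal'' in the sense of the exact homomorphism. Substituting $s=(u,v)$ and $s'=(u',v')$ into the affordance-preserving equations gives directly
\[
R_T(s_T,a)=R_S(h(s_T),a), \qquad T_T(s'\mid s_T,a)=T_S(h(s')\mid h(s_T),a).
\]
This step is purely notational, and it is what the subsequent degradation bound will use.
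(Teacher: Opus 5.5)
Your proposal is correct and follows the same route as the paper: write $s_T=(u,v)$, take $\Phi$ from the analogy definition, and set $h(s_T)=(u,\Phi(v))$. Your Step~2 is more careful than the paper's one-line proof, which concludes $h(s_T)\in S_S$ directly from $\Phi(v)\in\mathcal{V}_S$ even though that only gives membership in $\mathcal{U}\times\mathcal{V}_S$; your domain-of-$R_S$ reading of the affordance-preserving identity, or either of your fallback assumptions, closes that gap, while Step~3 is not needed for the lemma itself.
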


\begin{proof}
Let $s_T = (u,v)$.
By definition of $h$:

\[
h(s_T) = (u,\Phi(v)).
\]

Since $\Phi(v) \in \mathcal{V}_S$,
we have $h(s_T) \in S_S$.
Existence follows directly from definition of $h$.
\end{proof}

\subsection{Value Functions}

For any policy $\pi : S \to \Delta(A)$,
define its value function in MDP $\mathcal{M}$ as:

\[
V_{\mathcal{M}}^{\pi}(s)
=
\mathbb{E}
\left[
\sum_{t=0}^{\infty}
\gamma^t
R_{\mathcal{M}}(s_t,a_t)
\;\middle|\;
s_0 = s,
\pi
\right].
\]

\subsection{Value Preservation Under Exact Homomorphism}

\begin{lemma}[Value Preservation]
If $h$ defines an exact homomorphism,
then for any policy $\pi_S : S_S \to \Delta(A)$,
define the induced target policy:

\[
\pi_T(s) := \pi_S(h(s)).
\]

Then for all $s_T \in S_T$:

\[
V_{\mathcal{M}_T}^{\pi_T}(s_T)
=
V_{\mathcal{M}_S}^{\pi_S}(h(s_T)).
\]
\end{lemma}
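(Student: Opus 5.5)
The plan is to show that the two value functions are fixed points of Bellman operators related through $h$, and then to invoke uniqueness of the fixed point.

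First I will make the exact-homomorphism transition condition usable as a statement about pushforward distributions. The condition $T_T(s'\mid s,a)=T_S(h(s')\mid h(s),a)$ must be read as saying that the distribution of $h(s')$, with $s'\sim T_T(\cdot\mid s,a)$, equals $T_S(\cdot\mid h(s),a)$. In the discrete case this means
\[
\sum_{s':h(s')=\bar s'}T_T(s'\mid s,a)=T_S(\bar s'\mid h(s),a).
\]
Under the factorization $h(u,v)=(u,\Phi(v))$, the pointwise condition in the affordance-preserving definition gives this directly if $\Phi$ is injective on the relevant support. I will state the pushforward form as the working interpretation of the exact homomorphism. I expect this to be the main subtlety, since the literal pointwise equality does not yield a consistent distribution when $h$ is many-to-one.

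Next, define $W(s_T):=V^{\pi_S}_{\mathcal{M}_S}(h(s_T))$ on $S_T$. Take the source Bellman equation at $h(s_T)$,
\[
V^{\pi_S}_{\mathcal{M}_S}(\bar s)=\sum_a\pi_S(a\mid \bar s)\Big[R_S(\bar s,a)+\gamma\sum_{\bar s'}T_S(\bar s'\mid \bar s,a)V^{\pi_S}_{\mathcal{M}_S}(\bar s')\Big].
\]
Substitute $\pi_S(a\mid h(s_T))=\pi_T(a\mid s_T)$ and $R_S(h(s_T),a)=R_T(s_T,a)$. Then rewrite the expectation over $\bar s'\sim T_S(\cdot\mid h(s_T),a)$ as an expectation of $V^{\pi_S}_{\mathcal{M}_S}(h(s'))=W(s')$ over $s'\sim T_T(\cdot\mid s_T,a)$, using the pushforward identity (change of variables). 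This gives $W=\mathcal{B}^{\pi_T}_T W$, where $\mathcal{B}^{\pi_T}_T$ is the policy-evaluation Bellman operator of $\mathcal{M}_T$ under $\pi_T$.

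Finally, assuming bounded rewards, $\mathcal{B}^{\pi_T}_T$ is a $\gamma$-contraction in sup-norm for $\gamma\in(0,1)$. Its unique fixed point is $V^{\pi_T}_{\mathcal{M}_T}$, so $W=V^{\pi_T}_{\mathcal{M}_T}$, which is the claim.

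As an alternative cross-check, one can couple trajectories directly. By induction on $t$, the process $(h(s_t),a_t)$ generated in $\mathcal{M}_T$ under $\pi_T$ has the same law as the process generated in $\mathcal{M}_S$ under $\pi_S$ from $h(s_T)$. Since $R_T(s_t,a_t)=R_S(h(s_t),a_t)$, the expected discounted returns coincide term by term. The inductive step uses the same pushforward identity.
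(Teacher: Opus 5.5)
Your proposal is correct and takes the same route as the paper, which substitutes the reward and transition conditions into the Bellman equation and concludes that the two Bellman equations coincide. Your version is the rigorous form of that argument in two respects. First, the paper's displayed chain tacitly replaces $V^{\pi_T}_{\mathcal{M}_T}(s')$ by $V^{\pi_S}_{\mathcal{M}_S}(h(s'))$; you justify that step by showing $V^{\pi_S}_{\mathcal{M}_S}\circ h$ is a fixed point of the target policy-evaluation operator and invoking uniqueness. Second, your pushforward reading of the transition condition is necessary, not cosmetic: with a many-to-one $h$ and source states outside $h(S_T)$, the literal pointwise condition can hold as a consistent distribution while the conclusion fails.
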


\begin{proof}
From reward preservation:

\[
R_T(s,a) = R_S(h(s),a).
\]

From transition preservation:

\[
T_T(s'|s,a)
=
T_S(h(s')|h(s),a).
\]

Thus the Bellman equations coincide:

\[
V_{\mathcal{M}_T}^{\pi_T}(s)
=
\mathbb{E}
\left[
R_T(s,a)
+
\gamma
V_{\mathcal{M}_T}^{\pi_T}(s')
\right]
\]

\[
=
\mathbb{E}
\left[
R_S(h(s),a)
+
\gamma
V_{\mathcal{M}_S}^{\pi_S}(h(s'))
\right]
=
V_{\mathcal{M}_S}^{\pi_S}(h(s)).
\]
\end{proof}

\subsection{Induced Policies and Value Functions}

Let the source and target MDPs be

\begin{align*}
\mathcal{M}_S &= (S_S, A, T_S, R_S, \gamma), \\
\mathcal{M}_T &= (S_T, A, T_T, R_T, \gamma),
\end{align*}

where $\gamma \in (0,1)$.

Let
\[
h : S_T \to S_S
\]
be an exact MDP homomorphism and
\[
\Psi : S_T \to S_S
\]
be the approximate state mapping produced by ASPECT.

\paragraph{Source Policy.}
Let
\[
\pi_S : S_S \to \Delta(A)
\]
be a stochastic policy.

\paragraph{Induced Target Policies.}
We define two target policies:

\[
(\pi_S \circ h)(a|s_T)
:=
\pi_S(a|h(s_T)),
\]

\[
\pi_{\Psi}(a|s_T)
:=
\pi_S(a|\Psi(s_T)).
\]

\paragraph{Value Function.}

For any MDP $\mathcal{M}$ and policy $\pi$:

\[
V_{\mathcal{M}}^\pi(s)
=
\mathbb{E}
\left[
\sum_{t=0}^\infty
\gamma^t R(s_t,a_t)
\;\middle|\;
s_0=s
\right].
\]

Define the Bellman operator:

\begin{align*}
(\mathcal{T}_{\mathcal{M}}^\pi V)(s)
&= \int_A \pi(a|s) \bigg[ R(s,a) \\
&\quad+ \gamma \int_S T(s'|s,a)V(s') ds' \bigg] da.
\end{align*}

\begin{lemma}[Contraction Property]
For any policy $\pi$, the Bellman operator is a $\gamma$-contraction:
\[
\|\mathcal{T}_{\mathcal{M}}^\pi V_1 - \mathcal{T}_{\mathcal{M}}^\pi V_2\|_\infty \le \gamma \|V_1 - V_2\|_\infty.
\]
Hence $\mathcal{T}_{\mathcal{M}}^\pi$ has a unique fixed point $V_{\mathcal{M}}^\pi$.
\end{lemma}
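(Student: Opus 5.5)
The plan is to fix an arbitrary state $s$, write out $\mathcal{T}_{\mathcal{M}}^\pi V_1(s) - \mathcal{T}_{\mathcal{M}}^\pi V_2(s)$, and observe that the reward terms cancel. This cancellation holds because both applications use the same policy $\pi$ and the same reward $R$. What remains is
\[
(\mathcal{T}_{\mathcal{M}}^\pi V_1)(s) - (\mathcal{T}_{\mathcal{M}}^\pi V_2)(s)
= \gamma \int_A \pi(a|s) \int_S T(s'|s,a)\bigl(V_1(s') - V_2(s')\bigr)\, ds'\, da .
\]
Next, take absolute values inside both integrals. For each fixed $(s,a)$, the kernel $T(\cdot|s,a)$ is a probability density on $S$, so the inner integral is bounded by $\|V_1 - V_2\|_\infty$. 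Likewise $\pi(\cdot|s)$ is a probability distribution on $A$, so the outer average of this constant bound is again at most $\|V_1-V_2\|_\infty$. Taking the supremum over $s$ then gives the $\gamma$-contraction inequality.

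For the fixed-point claim, I would work in the Banach space $B(S)$ of bounded measurable functions with the sup norm. First, check that $\mathcal{T}_{\mathcal{M}}^\pi$ maps $B(S)$ to itself. This requires a standing assumption that $R$ is bounded, say $|R|\le R_{\max}$; then $\|\mathcal{T}^\pi V\|_\infty \le R_{\max} + \gamma\|V\|_\infty$. Since $\gamma<1$, the Banach fixed-point theorem gives a unique fixed point $V^\ast\in B(S)$.

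It remains to identify $V^\ast$ with $V_{\mathcal{M}}^\pi$. By conditioning on the first action and the next state (the Markov property), the value function satisfies $V_{\mathcal{M}}^\pi = \mathcal{T}_{\mathcal{M}}^\pi V_{\mathcal{M}}^\pi$. It is also bounded, by $R_{\max}/(1-\gamma)$. Uniqueness of the fixed point then forces $V_{\mathcal{M}}^\pi = V^\ast$.

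The main obstacle is not the contraction estimate, which is routine, but the hypotheses needed to make the fixed-point statement rigorous. One must assume rewards are bounded, so that $V^\pi_{\mathcal{M}}$ is finite and lies in $B(S)$. In continuous spaces one also needs measurability of $\pi$, $T$, and $R$ so the integrals are well-defined. I would state these as standing assumptions, which the paper leaves implicit, rather than prove them. The remaining subtle step is justifying that $V^\pi_{\mathcal{M}}$ solves the Bellman equation. I would handle this by splitting the discounted sum into its $t=0$ term plus $\gamma$ times a shifted sum, and applying the tower property. Boundedness ensures the interchange of expectation and infinite sum, via dominated convergence.
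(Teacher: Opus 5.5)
Your proposal is correct and follows the paper's proof essentially step for step: the reward terms cancel, the absolute value moves inside the integrals, you use that $T(\cdot|s,a)$ and $\pi(\cdot|s)$ integrate to one, you take the supremum over $s$, and you invoke the Banach fixed-point theorem. The only difference is that you make explicit what the paper leaves implicit: the space $B(S)$, a bounded-reward and measurability assumption so that $\mathcal{T}_{\mathcal{M}}^\pi$ maps $B(S)$ to itself, and a check via the Bellman equation that the unique fixed point really is $V_{\mathcal{M}}^\pi$, which the paper simply asserts.
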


\begin{proof}
For any state $s$:
\begin{align*}
&|(\mathcal{T}_{\mathcal{M}}^\pi V_1)(s) - (\mathcal{T}_{\mathcal{M}}^\pi V_2)(s)| \\
&\quad = \bigg| \int_A \pi(a|s) \Big[ \gamma \int_S T(s'|s,a) (V_1(s') - V_2(s')) ds' \Big] da \bigg| \\
&\quad \le \gamma \int_A \pi(a|s) \int_S T(s'|s,a) \big| V_1(s') - V_2(s') \big| ds' da \\
&\quad \le \gamma \int_A \pi(a|s) \int_S T(s'|s,a) \| V_1 - V_2 \|_\infty ds' da \\
&\quad = \gamma \| V_1 - V_2 \|_\infty \int_A \pi(a|s) \Big( \int_S T(s'|s,a) ds' \Big) da.
\end{align*}

Since $T(\cdot|s,a)$ is a probability distribution over the state space, we have $\int_S T(s'|s,a) ds' = 1$. Similarly, since $\pi(\cdot|s)$ is a probability distribution over actions, we have $\int_A \pi(a|s) da = 1$. Thus:
\[
|(\mathcal{T}_{\mathcal{M}}^\pi V_1)(s) - (\mathcal{T}_{\mathcal{M}}^\pi V_2)(s)| \le \gamma \| V_1 - V_2 \|_\infty.
\]

Taking the supremum over all $s$ yields the contraction property. By the Banach fixed-point theorem, since $\gamma \in (0, 1)$, $\mathcal{T}_{\mathcal{M}}^\pi$ has a unique fixed point.
\end{proof}

\subsection{Performance Degradation Under Approximate Mapping}

Let

\[
s_{\text{ideal}} := h(s_T),
\qquad
\hat{s} := \Psi(s_T).
\]


\begin{assumption}[TV-Lipschitz Source Policy]
There exists $L_{TV} > 0$ such that for all $s_1, s_2$:
$$D_{TV}(\pi_S(\cdot|s_1), \pi_S(\cdot|s_2)) \le L_{TV} ||s_1 - s_2||$$
\end{assumption}

\begin{assumption}[Bounded Action-Value]
There exists $Q_{\max}$ such that

\[
|Q_T^{\pi_S \circ h}(s,a)| \le Q_{\max}
\quad
\forall s,a.
\]
\end{assumption}

\begin{theorem}[Performance Degradation Bound]
Suppose the source policy $\pi_S$ is $L_{TV}$-Lipschitz continuous with respect to the Total Variation distance, meaning $D_{TV}(\pi_S(\cdot|s_1), \pi_S(\cdot|s_2)) \le L_{TV} ||s_1 - s_2||$ for all $s_1, s_2 \in S_S$. 

If the approximate state mapping $\Psi$ produced by ASPECT satisfies $||\Psi(s) - h(s)|| \le \epsilon$ for any target states $s \in S_T$, then the maximum value function degradation across all states (measured in the supremum norm) is bounded by:
$$||V_{\mathcal{M}_T}^{\pi_\Psi} - V_{\mathcal{M}_T}^{\pi_{S \circ h}}||_\infty \le \frac{2 L_{TV} Q_{max}}{1-\gamma}\epsilon$$
\end{theorem}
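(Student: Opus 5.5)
The plan is to compare the two induced target policies $\pi_\Psi$ and $\pi_S\circ h$ inside the same MDP $\mathcal{M}_T$. We write both value functions as fixed points of their Bellman operators and use the Contraction Property lemma to absorb the propagation of per-step errors. Let $V_\Psi := V^{\pi_\Psi}_{\mathcal{M}_T}$ and $V_h := V^{\pi_S\circ h}_{\mathcal{M}_T}$. Let $Q_h := Q_T^{\pi_S\circ h}$ denote the action-value of $\pi_S\circ h$ in $\mathcal{M}_T$, so that
\[
V_h(s)=\int_A \pi_S(a|h(s))\,Q_h(s,a)\,da,\qquad Q_h(s,a)=R_T(s,a)+\gamma\int_{S_T} T_T(s'|s,a)V_h(s')\,ds'.
\]
By the Bounded Action-Value assumption, $|Q_h|\le Q_{\max}$.

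\textbf{Step 1 (splitting into a one-step term and a propagated term).} For each $s\in S_T$ we decompose
\begin{align*}
V_\Psi(s)-V_h(s) &= \big(\mathcal{T}^{\pi_\Psi}_{\mathcal{M}_T}V_\Psi\big)(s)-\big(\mathcal{T}^{\pi_\Psi}_{\mathcal{M}_T}V_h\big)(s) \\
&\quad+ \big(\mathcal{T}^{\pi_\Psi}_{\mathcal{M}_T}V_h\big)(s)-\big(\mathcal{T}^{\pi_S\circ h}_{\mathcal{M}_T}V_h\big)(s).
\end{align*}
By the Contraction Property lemma, the first difference is bounded in absolute value by $\gamma\|V_\Psi-V_h\|_\infty$.

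\textbf{Step 2 (controlling the one-step policy mismatch).} The second difference equals
\[
\int_A \big(\pi_S(a|\Psi(s))-\pi_S(a|h(s))\big)\,Q_h(s,a)\,da .
\]
Since $|Q_h|\le Q_{\max}$, this is at most $Q_{\max}\,\|\pi_S(\cdot|\Psi(s))-\pi_S(\cdot|h(s))\|_1$. The $L^1$ distance equals $2D_{TV}$, so this is at most $2Q_{\max}D_{TV}(\pi_S(\cdot|\Psi(s)),\pi_S(\cdot|h(s)))$. By the TV-Lipschitz assumption together with the hypothesis $\|\Psi(s)-h(s)\|\le\epsilon$, the second difference is therefore at most $2L_{TV}Q_{\max}\epsilon$, uniformly in $s$.

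\textbf{Step 3 (closing the recursion).} Taking the supremum over $s$ gives
\[
\|V_\Psi-V_h\|_\infty\le\gamma\|V_\Psi-V_h\|_\infty+2L_{TV}Q_{\max}\epsilon.
\]
Both value functions are bounded: rewards are bounded via $Q_{\max}$ (or we assume bounded rewards). Hence $\|V_\Psi-V_h\|_\infty$ is finite and we may rearrange, using $\gamma<1$, to obtain the claimed bound $\frac{2L_{TV}Q_{\max}}{1-\gamma}\epsilon$.

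\textbf{Expected obstacle.} The main point of care is the factor of $2$ in Step 2. We must bound $|\int f\,d(\mu-\nu)|$ by $\|f\|_\infty\|\mu-\nu\|_1=2\|f\|_\infty D_{TV}$, and we cannot use the sharper $(\sup f-\inf f)D_{TV}$ in place of it, so the convention for $D_{TV}$ (half the $L^1$ norm) must be fixed explicitly. A second subtlety is finiteness of $\|V_\Psi\|_\infty$, which is needed for the rearrangement in Step 3. If it is not evident, the fallback is to avoid rearranging: iterate the Step 1 decomposition $n$ times, sum the geometric series of one-step errors, and let $n\to\infty$. This yields the same bound.
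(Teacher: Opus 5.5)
Your proposal is correct and matches the paper's proof essentially step for step: you add and subtract $\mathcal{T}^{\pi_\Psi}_{\mathcal{M}_T}V_h$, bound the first difference by the $\gamma$-contraction and the second by $Q_{\max}\|\pi_S(\cdot|\Psi(s))-\pi_S(\cdot|h(s))\|_1 = 2Q_{\max}D_{TV}\le 2L_{TV}Q_{\max}\epsilon$, then take the supremum and rearrange. Your extra check that $\|V_\Psi\|_\infty<\infty$ is a point the paper leaves implicit but which the rearrangement genuinely needs; it does follow from the bounded action-value assumption, which forces $|V_h|\le Q_{\max}$ and hence $|R_T|\le(1+\gamma)Q_{\max}$.
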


\begin{proof}
Let $V_1 := V_{\mathcal{M}_T}^{\pi_\Psi}$ and $V_2 := V_{\mathcal{M}_T}^{\pi_{S \circ h}}$.
Define the Bellman operators $\mathcal{T}_1 := \mathcal{T}_{\mathcal{M}_T}^{\pi_\Psi}$ and $\mathcal{T}_2 := \mathcal{T}_{\mathcal{M}_T}^{\pi_{S \circ h}}$.

Since $V_1$ and $V_2$ are fixed points of their respective Bellman operators:
$$V_1(s) = (\mathcal{T}_1 V_1)(s) \quad \text{and} \quad V_2(s) = (\mathcal{T}_2 V_2)(s)$$

For any arbitrary target state $s \in S_T$, we evaluate the absolute difference and apply the triangle inequality by adding and subtracting $(\mathcal{T}_1 V_2)(s)$:
$$|V_1(s) - V_2(s)| = |(\mathcal{T}_1 V_1)(s) - (\mathcal{T}_2 V_2)(s)|$$
$$\le |(\mathcal{T}_1 V_1)(s) - (\mathcal{T}_1 V_2)(s)| + |(\mathcal{T}_1 V_2)(s) - (\mathcal{T}_2 V_2)(s)|$$

For the first term, because the Bellman operator is a $\gamma$-contraction in the supremum norm, the difference in expectations from state $s$ is strictly bounded by the maximum possible difference across all states:
$$|(\mathcal{T}_1 V_1)(s) - (\mathcal{T}_1 V_2)(s)| \le \gamma ||V_1 - V_2||_\infty$$

For the second term, we evaluate the difference caused by the policy shift:
\begin{align*}
&|(\mathcal{T}_1 V_2)(s) - (\mathcal{T}_2 V_2)(s)| \\
&\quad= \left| \int_A (\pi_S(a|\Psi(s)) - \pi_S(a|h(s))) Q_T^{V_2}(s,a) da \right|
\end{align*}

Applying the absolute value, bounding the action-value by $Q_{max}$ (Assumption 3), and using the definition of Total Variation distance:
$$\le Q_{max} \int_A \left| \pi_S(a|\Psi(s)) - \pi_S(a|h(s)) \right| da$$
$$= 2 Q_{max} D_{TV}(\pi_S(\cdot|\Psi(s)), \pi_S(\cdot|h(s)))$$

Applying the $L_{TV}$-Lipschitz assumption and the state approximation bound $||\Psi(s) - h(s)|| \le \epsilon$:
$$\le 2 Q_{max} L_{TV} ||\Psi(s) - h(s)|| \le 2 L_{TV} Q_{max} \epsilon$$

Substituting both terms back into our original pointwise inequality yields:
$$|V_1(s) - V_2(s)| \le \gamma ||V_1 - V_2||_\infty + 2 L_{TV} Q_{max} \epsilon$$

Since this inequality holds for every state $s \in S_T$, it must also hold for the supremum over all states. Taking the supremum of the left side gives:
$$||V_1 - V_2||_\infty \le \gamma ||V_1 - V_2||_\infty + 2 L_{TV} Q_{max} \epsilon$$

Rearranging the terms completes the proof:
$$(1-\gamma) ||V_1 - V_2||_\infty \le 2 L_{TV} Q_{max} \epsilon$$
$$||V_1 - V_2||_\infty \le \frac{2 L_{TV} Q_{max}}{1-\gamma}\epsilon$$
\end{proof}

\section{Experiments}

In this section, we present the experimental evaluation of ASPECT. We describe the environments, the source and target tasks, and the baselines used to validate our natural language-conditioned imagination framework.

\subsection{Experimental Setup}
We evaluate ASPECT across three distinct environments designed to test complementary aspects of generalization: MiniGrid, MiniWorld, and a custom Fragile Object Manipulation environment. Visualizations of these environments are shown in Figure \ref{fig:envs}. These environments differ in observation modality (image-based vs. feature-based), action space (discrete vs. continuous), reward structure (sparse vs. dense), and the underlying RL algorithm used for the source policy (DQN, PPO, SAC), allowing for a comprehensive analysis of the proposed method's agnosticism. All episodes terminate either upon successful task completion or when the maximum number of timesteps is reached. All experiments were run for 5 random seeds. Implementation details of the RL policies are provided in Appendix \ref{app:rl_details}.

\subsubsection{MiniGrid}
MiniGrid \cite{Minigrid} is a 2D grid-world that provides pixel-based top-down observations. MiniGrid supports discrete action spaces, enabling rapid prototyping of navigation and object-interaction tasks. The source policy is trained using DQN. The environment uses a sparse reward structure, where the agent receives a positive reward only upon successful task completion.

\subsubsection{MiniWorld}
MiniWorld \cite{Miniworld23} is a 3D first-person simulator that provides egocentric, pixel-based visual observations and supports discrete control. The source policy is trained using PPO. Unlike MiniGrid, MiniWorld employs a dense reward setting, where the agent receives incremental shaping rewards for approaching goal objects in addition to a terminal success reward. This setting allows us to test visual generalization and affordance transfer under richer sensory inputs.

\subsubsection{Fragile Object Manipulation Environment}
To explicitly evaluate affordance understanding and force-sensitive control, we developed a custom feature-based environment. The agent interacts with fragile objects, each characterized by a specific \textit{fragility threshold}. If the applied force exceeds this threshold, the object breaks. The source policy is trained using SAC.
The agent operates in a continuous, three-dimensional action space $(\text{turn angle}, \text{move distance}, \text{apply force})$. The reward function is dense: the agent is penalized for breaking an object, rewarded for successful pickups, and receives a positive shaping reward for approaching the target.  

\begin{figure}[htbp]
    \centering
    \begin{subfigure}[b]{0.32\columnwidth}
        \centering
        \includegraphics[width=\textwidth]{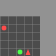}
        \caption{MiniGrid}
        \label{fig:minigrid}
    \end{subfigure}
    \hfill
    \begin{subfigure}[b]{0.32\columnwidth}
        \centering
        \includegraphics[width=\textwidth]{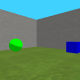}
        \caption{MiniWorld}
        \label{fig:miniworld}
    \end{subfigure}
    \hfill
    \begin{subfigure}[b]{0.32\columnwidth}
        \centering
        \includegraphics[width=\textwidth]{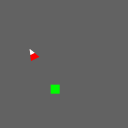}
        \caption{Manipulation}
        \label{fig:manipulation}
    \end{subfigure}
    \caption{Visualizations of the three environments used in our experiments: (a) MiniGrid (2D navigation), (b) MiniWorld (3D egocentric), and (c) Manipulation (continuous control).}
    \label{fig:envs}
\end{figure}

The observation is a 12-dimensional feature vector that encodes the agent's orientation (as $\sin/\cos$ of the heading angle), the object’s relative position and bearing (as $\sin/\cos$, normalized by environment size), one-hot encodings for object \textit{type} (circle/square) and \textit{weight} (light/heavy), and boolean flags indicating whether the object has been picked or broken.

Together, these environments allow us to demonstrate that ASPECT is agnostic to both observation modality (image-based or feature-based), control type (discrete or continuous), and the RL algorithm (DQN, PPO, or SAC), while remaining effective across sparse and dense reward regimes.

\subsection{Source and Target Task Definitions}
For each environment, we train a source policy $\pi_{\text{source}}$ on \textit{source task}, and subsequently evaluate its zero-shot transfer performance on multiple \textit{target tasks} that introduce new objects, altered visual contexts, or both.

\subsubsection{Source Task Settings}
\begin{itemize}
    \item MiniGrid: Pick the \textit{red ball} and avoid the \textit{green ball}.
    \item MiniWorld: Pick the \textit{blue box} and avoid the \textit{green ball}.
    \item Custom Env: Pick both the \textit{light circle} (low force threshold) and \textit{heavy square} (high force threshold) without breaking either.
\end{itemize}

\subsubsection{Target Task Settings (Evaluation)}
We evaluate ASPECT on three categories of generalization challenges, as detailed in Table \ref{tab:target_tasks}:
\begin{enumerate}
    \item \textbf{Case 1 (Unseen Objects)}: Tests whether the agent can reuse a learned skill for unseen objects (e.g., picking a ``yellow duckie'' instead of a ``blue box'') by leveraging affordance-based analogies.
    \item \textbf{Case 2 (Combined Shift)}: Challenges the agent to handle both visual shifts (e.g., texture changes) and semantic shifts (unseen objects) simultaneously.
    \item \textbf{Case 3 (Reversed Task)}: Evaluates robustness to conflicting priors, where the object associated with reward in the source task becomes a distractor in the target task.
\end{enumerate}

\begin{table}[h]
\caption{Target Task Settings (Evaluation). The agent must generalize zero-shot to these novel scenarios.}
\label{tab:target_tasks}
\begin{center}
\begin{small}
\resizebox{\columnwidth}{!}{
\begin{tabular}{@{}llp{5.5cm}@{}}
\toprule
\textbf{Case} & \textbf{Env.} & \textbf{Target Task Description} \\ \midrule
\multirow{3}{*}{\shortstack[l]{\textbf{1. Unseen Objects}\\(Different reward object)}} & MiniGrid & Pick \textit{purple box}, avoid \textit{green ball}. \\
 & MiniWorld & Pick \textit{yellow duckie}, avoid \textit{green ball}. \\
 & Custom & Pick \textit{heavy circle} \& \textit{light square} (inverted weights). \\ \midrule
\multirow{2}{*}{\shortstack[l]{\textbf{2. Combined Shift}\\(Visual + Semantic)}} & MiniGrid & Blue Walls. Pick \textit{purple box}. \\
 & MiniWorld & Wood/Brick textures. Pick \textit{yellow duckie}. \\ \midrule
\multirow{2}{*}{\shortstack[l]{\textbf{3. Reversed Task}\\(Prior as Distractor)}} & MiniGrid & Pick \textit{purple box}, avoid \textit{red ball} (source target). \\
 & MiniWorld & Pick \textit{yellow duckie}, avoid \textit{blue box} (source target). \\ \bottomrule
\end{tabular}
}
\end{small}
\end{center}
\end{table}

\section{Experimental Results}
In this section, we detail the experimental results. 

We conduct a series of experiments designed to evaluate the primary capabilities and advantages of ASPECT. Our evaluation aims to answer two key questions:
\begin{enumerate}
    \item \textbf{Can ASPECT generalize to truly novel tasks?} We test its ability to perform zero-shot knowledge transfer to target tasks that share the same underlying affordance (e.g., "picking") but involve previously unseen objects or different, though semantically similar, observations.
    
    \item \textbf{How data-efficient is ASPECT compared to fine-tuning?} We compare the zero-shot performance of ASPECT against the data efficiency of fine-tuning the source policy on the target task. This highlights the sample complexity advantage of our approach.
\end{enumerate}

\subsection{Baselines}
We compare ASPECT against the following baselines:
\begin{itemize}
    \item \textbf{SF-Simple} \cite{chua2024learning}: A successor feature-based method that learns transferrable representations without complex auxiliary tasks.
    \item \textbf{SF-Reconstruction} \cite{zhang2017}: An approach utilizing successor features combined with a reconstruction auxiliary task to learn robust state representations for navigation across similar environments.
    \item \textbf{PPO/DQN/SAC (Source)}: The original source policy evaluated directly on the target task (zero-shot) to measure the immediate transfer gap.
    \item \textbf{PPO/DQN/SAC (Fine-tuned)}: The source policy fine-tuned on the target task, providing an upper bound on performance (or a strong adaptive baseline) given access to target environment interactions. We denote fine-tuning until convergence as \textbf{FT (Conv.)} and fine-tuning with limited steps as \textbf{FT (Lim.)}.
\end{itemize}

While our approach is conceptually inspired by MAGIK \cite{palattuparambil2025magik}, we do not include it as a baseline. MAGIK is designed to transfer skills between \textit{known} objects using discrete, one-hot class representations. However, ASPECT focuses on generalization to \textit{unseen} objects where such pre-defined one-hot vectors cannot be constructed, rendering MAGIK inapplicable to these experimental settings.

First, we briefly describe our captioning process. To generate noise-free semantic descriptions of the environment, we employed a structured captioning module (see Appendix \ref{app:query_types}). This module populates a predefined template with sensor data, such as object type and location, ensuring accurate and consistent input for the language-conditioned imagination process without requiring complex vision-language model inference or post-processing. For the semantic mapping $M_{LLM}$, we utilized Grok-4.1-fast and Gemini 2.5 Flash as our Large Language Models.

\subsection{Zero-Shot Generalization Results}
We evaluate the zero-shot transfer performance of ASPECT across three generalization scenarios involving novel objects and observational changes.

\subsubsection{Case 1: Generalization to Unseen Objects}
Results for this setting, where the agent transfers its skill to previously unseen objects, are summarized in Table \ref{tab:case1_consolidated}.

\begin{table}[t]
  \caption{Consolidated Experimental Results for Case 1 (Generalization to Unseen Objects). Metrics are reported as counts (out of 10) $\pm$ std dev. "Tgt" = Target, "Dstr" = Distractor, "Lift" = Successful Lifts, "Brk" = Broken Objects.}
  \label{tab:case1_consolidated}
  \centering
  \small
  \setlength{\tabcolsep}{1.5pt}
  \sc
  \resizebox{\columnwidth}{!}{
  \begin{tabular}{lcccccc}
    \toprule
     & \multicolumn{2}{c}{MiniWorld} & \multicolumn{2}{c}{MiniGrid} & \multicolumn{2}{c}{Manip.} \\
     \cmidrule(lr){2-3} \cmidrule(lr){4-5} \cmidrule(lr){6-7}
    Method & Tgt & Dstr & Tgt & Dstr & Lift & Brk \\
    \midrule
    Source & 2.80 $\pm$ 0.80 & 0.20 $\pm$ 0.20 & 0.00 $\pm$ 0.00 & 0.00 $\pm$ 0.00 & 2.80 $\pm$ 1.06 & 4.60 $\pm$ 0.04 \\
    FT (Conv.) & 10.0 $\pm$ 0.00 & 0.60 $\pm$ 0.40 & 9.60 $\pm$ 0.24 & 0.00 $\pm$ 0.00 & 9.80 $\pm$ 0.20 & 0.00 $\pm$ 0.00 \\
    FT (Lim.) & 8.00 $\pm$ 1.04 & 0.40 $\pm$ 0.24 & 7.20 $\pm$ 0.86 & 0.40 $\pm$ 0.40 & 6.60 $\pm$ 0.87 & 2.40 $\pm$ 1.51 \\
    SF-Simp. & 0.20 $\pm$ 0.20 & 0.00 $\pm$ 0.00 & 5.60 $\pm$ 1.28 & 0.00 $\pm$ 0.00 & - & - \\
    SF-Rec. & 0.00 $\pm$ 0.00 & 0.00 $\pm$ 0.00 & 9.00 $\pm$ 0.20 & 0.00 $\pm$ 0.00 & - & - \\
    ASPECT & \textbf{8.40 $\pm$ 0.24} & 0.02 $\pm$ 0.02 & \textbf{9.40 $\pm$ 0.40} & 0.00 $\pm$ 0.00 & \textbf{9.60 $\pm$ 0.24} & 0.40 $\pm$ 0.24 \\
    \bottomrule
  \end{tabular}
  }
\end{table}

In the MiniWorld environment (Table \ref{tab:case1_consolidated}), where the target object is a ``yellow duckie'' (unseen during training), the standard PPO policy fails completely (2.80 success), as it relies on specific visual features of the source object (``blue box''). Similarly, even though the successor feature baselines (SF-Simple and SF-Reconstruction) are allowed to interact with the target environment, they fail to generalize (see Figure \ref{fig:sf_miniworld} in Appendix \ref{app:additional_curves}). In contrast, ASPECT, which operates zero-shot without any target interaction, achieves a success rate of 8.40 $\pm$ 0.24, outperforming the PPO policy fine-tuned for 20K steps (8.00 $\pm$ 1.04). While it strictly underperforms the fully converged fine-tuned PPO upper bound (10.00 $\pm$ 0.00), it achieves this without any gradient updates on the target task. This demonstrates the effectiveness of the LLM-guided mapping in bridging the semantic gap between ``blue box'' and ``yellow duckie''.

In the MiniGrid environment (Table \ref{tab:case1_consolidated}), ASPECT again demonstrates rigorous zero-shot performance (9.40 $\pm$ 0.40), significantly outperforming the DQN policy fine-tuned for 50K steps (7.20 $\pm$ 0.86) and effectively matching the fully converged fine-tuned DQN baseline (9.60 $\pm$ 0.24). While SF-Reconstruction performs better here (9.00 $\pm$ 0.20) than in MiniWorld, potentially due to the simpler visual grid structure, it still lags behind ASPECT, despite having the advantage of environmental interaction. The baseline DQN policy completely fails to transfer.

Finally, in the Fragile Object Manipulation task (Table \ref{tab:case1_consolidated}), the challenge involves inverting physical properties: picking a ``heavy circle'' and ``light square'' when trained on the opposite. The standard SAC policy struggles significantly, lifting only 2.80 $\pm$ 1.06 objects on average, and incurs a high failure rate, breaking an average of 4.60 objects (as shown in the ``Num Objects broken'' column). ASPECT successfully navigates this affordance inversion, lifting 9.60 $\pm$ 0.24 objects, which is comparable to the fine-tuned SAC expert (9.80 $\pm$ 0.20) and significantly outperforms the SAC policy fine-tuned for 10K steps (6.60 $\pm$ 0.87), by correctly mapping the target objects to their source counterparts based on the ``fragility'' interactions described in text.

\subsubsection{Case 2: Combined Generalization}
Case 2 introduces a more difficult challenge: generalising to tasks that involve both a novel object and a shift in environmental observations (e.g., room colour or texture, See Figure \ref{fig:imagination_viz} for more variations). The results are presented in Table \ref{tab:case2_consolidated}.

In MiniWorld (Table \ref{tab:case2_consolidated}), the agent faces a scene with a wooden floor and brick walls (vs. grass/concrete in source) and must pick a ``yellow duckie''. The standard PPO baseline struggles significantly (3.20 $\pm$ 0.66). However, ASPECT achieves a success rate of 8.80 $\pm$ 0.37, outperforming the PPO policy fine-tuned for 20K steps (7.60 $\pm$ 0.50) and approaching the fully converged fine-tuned PPO baseline (9.20 $\pm$ 0.37).

\begin{table}[htbp]
  \caption{Consolidated Experimental Results for Case 2 (Combined Generalization). Metrics are reported as counts (out of 10) $\pm$ std dev. "Tgt" = Target, "Dstr" = Distractor.}
  \label{tab:case2_consolidated}
  \centering
  \small
  \setlength{\tabcolsep}{1.5pt}
  \sc
  \resizebox{\columnwidth}{!}{
  \begin{tabular}{lcccc}
    \toprule
     & \multicolumn{2}{c}{MiniWorld} & \multicolumn{2}{c}{MiniGrid} \\
     \cmidrule(lr){2-3} \cmidrule(lr){4-5}
    Method & Tgt & Dstr & Tgt & Dstr \\
    \midrule
    Source & 3.20 $\pm$ 0.66 & 0.20 $\pm$ 0.20 & 0.00 $\pm$ 0.00 & 0.00 $\pm$ 0.00 \\
    FT (Conv.) & 9.20 $\pm$ 0.37 & 0.40 $\pm$ 0.24 & 9.80 $\pm$ 0.20 & 0.00 $\pm$ 0.00 \\
    FT (Lim.) & 7.60 $\pm$ 0.50 & 0.40 $\pm$ 0.24 & 6.20 $\pm$ 0.73 & 0.60 $\pm$ 0.24 \\
    SF-Simp. & 0.00 $\pm$ 0.00 & 0.00 $\pm$ 0.00 & 4.80 $\pm$ 1.62 & 0.00 $\pm$ 0.00 \\
    SF-Rec. & 0.00 $\pm$ 0.00 & 0.00 $\pm$ 0.00 & 4.60 $\pm$ 2.09 & 0.00 $\pm$ 0.00 \\
    ASPECT & \textbf{8.80 $\pm$ 0.37} & 0.04 $\pm$ 0.24 & \textbf{8.80 $\pm$ 0.58} & 0.40 $\pm$ 0.24 \\
    \bottomrule
  \end{tabular}
  }
\end{table}

Similarly, in the MiniGrid environment (Table \ref{tab:case2_consolidated}), where the wall colour changes to blue, ASPECT maintains high performance (8.80 $\pm$ 0.58). It significantly outperforms the DQN policy fine-tuned for 50K steps (6.20 $\pm$ 0.73) and closely trails the fully converged fine-tuned DQN upper bound (9.80 $\pm$ 0.20). This result highlights the robustness of our text-conditioned imagination to compound distribution shifts. Qualitative visualizations of this process are shown in Appendix \ref{sec:unseen_setting}.

\subsubsection{Case 3: Unseen Object and Reversed Task}
In this scenario, we evaluate the agent's robustness to conflicting priors. The object that was rewarding in the source task is present in the target task but is now a distractor (or non-rewarding), while a novel object is the target.

In MiniWorld (Table \ref{tab:case3_consolidated}), the source rewarding object (``blue box'') acts as a distractor. The standard PPO policy exhibits a strong bias towards its training prior, mistakenly picking the ``blue box'' 8.20 $\pm$ 0.37 times, while rarely picking the correct target (``yellow duckie'', 2.80 $\pm$ 0.66). In contrast, ASPECT overcomes this bias, achieving 8.40 $\pm$ 0.54 success on the novel target, which is significantly higher than the PPO policy fine-tuned for 20K steps (5.60 $\pm$ 1.36). This is because the LLM hallucinate the rewarding object (``blue box'') as a distractor (``green ball'') in the source environment, and the agent ignores it.

Results in MiniGrid (Table \ref{tab:case3_consolidated}) follow a similar pattern. The DQN baseline is completely fixated on the ``red ball'' (source reward), picking it 7.20 $\pm$ 0.86 times and never picking the correct ``purple box''. ASPECT successfully generalizes, picking the ``purple box'' 9.00 $\pm$ 0.54 times, dramatically outperforming the DQN policy fine-tuned for 50K steps (3.40 $\pm$ 0.87). These results demonstrate that the LLM guides the hallucination of the target environment according to the current and the known task, allowing the agent to effectively filter out obsolete reward signals. 

\begin{table}[t]
  \caption{Consolidated Experimental Results for Case 3 (Reverse Task). Metrics are reported as counts (out of 10) $\pm$ std dev. "Old Target" refers to the object rewarding in source but distracting in target.}
  \label{tab:case3_consolidated}
  \begin{center}
    \begin{small}
      \begin{sc}
        \resizebox{\columnwidth}{!}{
        \begin{tabular}{lcccc}
          \toprule
           & \multicolumn{2}{c}{MiniWorld} & \multicolumn{2}{c}{MiniGrid} \\
           \cmidrule(lr){2-3} \cmidrule(lr){4-5}
          Method & Target & Old Target & Target & Old Target \\
          \midrule
          Source & 2.80 $\pm$ 0.66 & 8.20 $\pm$ 0.37 & 0.00 $\pm$ 0.00 & 7.20 $\pm$ 0.86 \\
          FT (Conv.) & 9.80 $\pm$ 0.24 & 0.00 $\pm$ 0.00 & 9.80 $\pm$ 0.20 & 0.00 $\pm$ 0.00 \\
          FT (Lim.) & 5.60 $\pm$ 1.36 & 0.40 $\pm$ 0.24 & 3.40 $\pm$ 0.87 & 0.80 $\pm$ 0.37 \\
          SF-Simple & 0.00 $\pm$ 0.00 & 0.00 $\pm$ 0.00 & 3.80 $\pm$ 1.68 & 0.00 $\pm$ 0.00 \\
          SF-Reconstruction & 0.00 $\pm$ 0.00 & 0.00 $\pm$ 0.00 & 1.20 $\pm$ 0.96 & 0.00 $\pm$ 0.00 \\
          ASPECT & \textbf{8.40 $\pm$ 0.54} & 0.20 $\pm$ 0.20 & \textbf{9.00 $\pm$ 0.54} & 0.40 $\pm$ 0.24 \\
          \bottomrule
        \end{tabular}
        }
      \end{sc}
    \end{small}
  \end{center}
\end{table}



\subsection{Data Efficiency and Sample Complexity}
\label{sec:data_efficiency}

A key advantage of our approach is its data efficiency. By leveraging the semantic priors of the LLM and the structural priors of the VAE, ASPECT enables immediate transfer without the need for environment interaction in the target domain.

To quantify this ``zero-shot gap,'' we compare ASPECT's performance against the fine-tuning curves of standard RL baselines. In the MiniWorld environment, ASPECT outperforms the PPO baseline fine-tuned for 20K steps across all cases (Cases 1, 2, and 3) (see Appendix \ref{app:additional_results}, Figure \ref{fig:miniworld_curves}). Similarly, in the MiniGrid environment, ASPECT consistently outperforms the DQN baseline fine-tuned for 50K steps across all cases (see Appendix \ref{app:additional_curves}, Figure \ref{fig:minigrid_curves}). Furthermore, in the Fragile Object Manipulation environment, ASPECT outperforms the SAC baseline fine-tuned for 10K steps (see Appendix \ref{app:additional_curves}, Figure \ref{fig:manipulation_curve}).

This represents a significant saving in sample complexity, which is particularly critical for real-world applications where data collection is expensive or dangerous. While fine-tuning eventually yields slightly higher asymptotic performance (e.g., reaching 10.0 success in MiniWorld), ASPECT provides a ``jump-start'' that equates to tens of thousands of training steps, effectively bypassing the initial exploration and adaptation phase.

\subsection{Robustness to Unstructured VLM Captions}

While structured captions ensure consistency, they rely on predefined templates that may not scale to open-ended scenarios. To evaluate ASPECT's robustness to variable and unstructured language, we conducted experiments on the MiniWorld environment using captions generated by a Vision-Language Model (VLM).

We used the \texttt{nvidia/\allowbreak nemotron-nano-12b-v2-vl} model to generate captions from observation frames, prompting it with both the image and auxiliary sensor data (object type and location). Due to the low resolution of the observations ($80 \times 80$), the VLM occasionally produced inaccuracies. To mitigate this, we implemented a verification step where captions were checked for completeness; inaccurate instances were re-generated using stronger models (Grok-4.1 and Gemini 2.5 Flash) or post-processed to remove hallucinations. The resulting captions were significantly more variable in length and structure compared to the template-based approach. To handle these longer, unstructured descriptions (see Appendix \ref{app:query_types} for examples), we replaced the standard CLIP text encoder with LongCLIP \cite{zhang2024long}, which supports inputs exceeding the 77-token limit of standard CLIP.

Table \ref{tab:miniworld_case_noise} (Appendix \ref{app:additional_results}) presents the results of ASPECT using these noisy, unstructured captions across all three generalization cases in MiniWorld. Remarkably, the method maintains high performance, achieving success rates comparable to those obtained with clean, structured captions (e.g., $8.60 \pm 0.50$ for Case 1 vs. $8.40 \pm 0.24$ with structured text). In all cases, the agent successfully identifies and picks the rewarding object while completely avoiding the distractor ($0.00$ failure rate). These results underscore ASPECT's ability to extract relevant semantic cues even from noisy, variable-length natural language descriptions, further validating the flexibility of the text-conditioned imagination framework. Crucially, this capability suggests that our method can be applied to real-world settings where structured captions are unavailable, a constraint that would otherwise severely limit real-world applicability.

\section{Discussion and Limitations}

In this work, we introduced ASPECT, a novel framework for zero-shot policy transfer that leverages the semantic reasoning of Large Language Models and the generative capabilities of text-conditioned VAEs. By treating the LLM as a dynamic semantic operator, our approach enables agents to ``imagine'' and solve analogous target tasks by relating them to prior experiences. Our experiments across diverse environments ranging from grid worlds to continuous manipulation demonstrated that ASPECT can robustly generalize to unseen objects, visual shifts, and even contradictory reward structures without any training in the target domain. Furthermore, we showed that this method remains effective even when relying on noisy, unstructured captions from vision-language models, highlighting its potential for real-world applications.

However, our approach has limitations. First, reliance on the generative capabilities of the VAE means the system is susceptible to imagination artifacts. As detailed in Appendix \ref{app:failure_cases}, the model can struggle with extreme close-ups or fail to materialize remapped objects in the imagined scene, potentially leading to policy failure. Second, by leveraging Large Language Models (LLMs) as semantic operators, our system inherits the known limitations of these models, including hallucinations, biases, and unpredictability. An incorrect semantic mapping generated by the LLM could lead to agent behaviors that are misaligned with the intended user goals, posing safety risks in critical applications. Future work must address robust verification and safety constraints to mitigate these risks before deployment in sensitive domains.

\section*{Impact Statement}

This paper presents work whose goal is to advance the field of Reinforcement Learning, specifically focusing on zero-shot generalization through language-conditioned imagination. Our method, ASPECT, demonstrates the potential to create more adaptable agents capable of operating in diverse and novel analogous environments without extensive retraining. This has positive implications for the scalability of autonomous systems in real-world settings, such as robotics and personalized assistants.

\bibliography{example_paper}

@inproceedings{palattuparambil2025magik,
  title     = {MAGIK: Mapping to Analogous Goals via Imagination-enabled Knowledge Transfer},
  author    = {Palattuparambil, Ajsal Shereef and Karimpanal, Thommen George and Rana, Santu},
  booktitle = {Proceedings of the 28rd European Conference on Artificial Intelligence (ECAI 2025)},
  publisher = {IOS Press},
  year      = {2025},
  pages     = {2874--2881},
  location  = {Bologna, Italy},
  series    = {Frontiers in Artificial Intelligence and Applications},
  volume    = {413},
  doi       = {10.3233/FAIA413}
}

@article{zhang2018dissection,
  title={A dissection of overfitting and generalization in continuous reinforcement learning},
  author={Zhang, Amy and Ballas, Nicolas and Pineau, Joelle},
  journal={arXiv preprint arXiv:1806.07937},
  year={2018}
}

@article{hafner2019dream,
  title={Dream to control: Learning behaviors by latent imagination},
  author={Hafner, Danijar and Lillicrap, Timothy and Ba, Jimmy and Norouzi, Mohammad},
  journal={arXiv preprint arXiv:1912.01603},
  year={2019}
}

@article{nair2018visual,
  title={Visual reinforcement learning with imagined goals},
  author={Nair, Ashvin V and Pong, Vitchyr and Dalal, Murtaza and Bahl, Shikhar and Lin, Steven and Levine, Sergey},
  journal={Advances in neural information processing systems},
  volume={31},
  year={2018}
}

@article{schulman2017proximal,
  title={Proximal policy optimization algorithms},
  author={Schulman, John and Wolski, Filip and Dhariwal, Prafulla and Radford, Alec and Klimov, Oleg},
  journal={arXiv preprint arXiv:1707.06347},
  year={2017}
}

@article{mnih2015human,
  title={Human-level control through deep reinforcement learning},
  author={Mnih, Volodymyr and Kavukcuoglu, Koray and Silver, David and Rusu, Andrei A and Veness, Joel and Bellemare, Marc G and Graves, Alex and Riedmiller, Martin and Fidjeland, Andreas K and Ostrovski, Georg and others},
  journal={nature},
  volume={518},
  number={7540},
  pages={529--533},
  year={2015},
  publisher={Nature Publishing Group}
}

@inproceedings{haarnoja2018soft,
  title={Soft actor-critic: Off-policy maximum entropy deep reinforcement learning with a stochastic actor},
  author={Haarnoja, Tuomas and Zhou, Aurick and Abbeel, Pieter and Levine, Sergey},
  booktitle={International conference on machine learning},
  pages={1861--1870},
  year={2018},
  organization={Pmlr}
}

@article{kingma2013auto,
  title={Auto-encoding variational bayes},
  author={Kingma, Diederik P and Welling, Max},
  journal={arXiv preprint arXiv:1312.6114},
  year={2013}
}

@inproceedings{radford2021learning,
  title={Learning transferable visual models from natural language supervision},
  author={Radford, Alec and Kim, Jong Wook and Hallacy, Chris and Ramesh, Aditya and Goh, Gabriel and Agarwal, Sandhini and Sastry, Girish and Askell, Amanda and Mishkin, Pamela and Clark, Jack and others},
  booktitle={International conference on machine learning},
  pages={8748--8763},
  year={2021},
  organization={PmLR}
}

@inproceedings{perez2018film,
  title={Film: Visual reasoning with a general conditioning layer},
  author={Perez, Ethan and Strub, Florian and De Vries, Harm and Dumoulin, Vincent and Courville, Aaron},
  booktitle={Proceedings of the AAAI conference on artificial intelligence},
  volume={32},
  year={2018}
}

@article{vaswani2017attention,
  title={Attention is all you need},
  author={Vaswani, Ashish and Shazeer, Noam and Parmar, Niki and Uszkoreit, Jakob and Jones, Llion and Gomez, Aidan N and Kaiser, {\L}ukasz and Polosukhin, Illia},
  journal={Advances in neural information processing systems},
  volume={30},
  year={2017}
}

@article{brown2020language,
  title={Language models are few-shot learners},
  author={Brown, Tom and Mann, Benjamin and Ryder, Nick and Subbiah, Melanie and Kaplan, Jared D and Dhariwal, Prafulla and Neelakantan, Arvind and Shyam, Pranav and Sastry, Girish and Askell, Amanda and others},
  journal={Advances in neural information processing systems},
  volume={33},
  pages={1877--1901},
  year={2020}
}

@inproceedings{higgins2017beta,
  title={beta-vae: Learning basic visual concepts with a constrained variational framework},
  author={Higgins, Irina and Matthey, Loic and Pal, Arka and Burgess, Christopher and Glorot, Xavier and Botvinick, Matthew and Mohamed, Shakir and Lerchner, Alexander},
  booktitle={International conference on learning representations},
  year={2017}
}

@article{ganin2016domain,
  title={Domain-adversarial training of neural networks},
  author={Ganin, Yaroslav and Ustinova, Evgeniya and Ajakan, Hana and Germain, Pascal and Larochelle, Hugo and Laviolette, Fran{\c{c}}ois and March, Mario and Lempitsky, Victor},
  journal={Journal of machine learning research},
  volume={17},
  number={59},
  pages={1--35},
  year={2016}
}

@inproceedings{gamrian2019transfer,
  title={Transfer learning for related reinforcement learning tasks via image-to-image translation},
  author={Gamrian, Shani and Goldberg, Yoav},
  booktitle={International conference on machine learning},
  pages={2063--2072},
  year={2019},
  organization={PMLR}
}

@article{zambaldi2018relational,
  title={Relational deep reinforcement learning},
  author={Zambaldi, Vinicius and Raposo, David and Santoro, Adam and Bapst, Victor and Li, Yujia and Babuschkin, Igor and Tuyls, Karl and Reichert, David and Lillicrap, Timothy and Lockhart, Edward and others},
  journal={arXiv preprint arXiv:1806.01830},
  year={2018}
}

@article{barreto2017successor,
  title={Successor features for transfer in reinforcement learning},
  author={Barreto, Andr{\'e} and Dabney, Will and Munos, R{\'e}mi and Hunt, Jonathan J and Schaul, Tom and van Hasselt, Hado P and Silver, David},
  journal={Advances in neural information processing systems},
  volume={30},
  year={2017}
}

@article{chua2024learning,
  title={Learning successor features the simple way},
  author={Chua, Raymond and Ghosh, Arna and Kaplanis, Christos and Richards, Blake A and Precup, Doina},
  journal={Advances in Neural Information Processing Systems},
  volume={37},
  pages={49957--50030},
  year={2024}
}

@inproceedings{Minigrid,
  author    = {Maxime Chevalier{-}Boisvert and Bolun Dai and Mark Towers and Rodrigo Perez{-}Vicente and Lucas Willems and Salem Lahlou and Suman Pal and Pablo Samuel Castro and Jordan Terry},
  title     = {Minigrid {\&} Miniworld: Modular {\&} Customizable Reinforcement Learning Environments for Goal-Oriented Tasks},
  booktitle = {Advances in Neural Information Processing Systems 36, New Orleans, LA, USA},
  month     = {December},
  year      = {2023}
}

@article{Miniworld23,
  author  = {Maxime Chevalier-Boisvert and Bolun Dai and Mark Towers and Rodrigo de Lazcano and Lucas Willems and Salem Lahlou and Suman Pal and Pablo Samuel Castro and Jordan Terry},
  title   = {Minigrid \& Miniworld: Modular \& Customizable Reinforcement Learning Environments for Goal-Oriented Tasks},
  journal = {CoRR},
  volume  = {abs/2306.13831},
  year    = {2023}
}

@inproceedings{zhang2017,
  title        = {Deep reinforcement learning with successor features for navigation across similar environments},
  author       = {Zhang, Jingwei and Springenberg, Jost Tobias and Boedecker, Joschka and Burgard, Wolfram},
  booktitle    = {2017 IEEE/RSJ International Conference on Intelligent Robots and Systems (IROS)},
  pages        = {2371--2378},
  year         = {2017},
  organization = {IEEE}
}

@inproceedings{zhang2024long,
  title={Long-clip: Unlocking the long-text capability of clip},
  author={Zhang, Beichen and Zhang, Pan and Dong, Xiaoyi and Zang, Yuhang and Wang, Jiaqi},
  booktitle={European conference on computer vision},
  pages={310--325},
  year={2024},
  organization={Springer}
}

@article{stable-baselines3,
  author  = {Antonin Raffin and Ashley Hill and Adam Gleave and Anssi Kanervisto and Maximilian Ernestus and Noah Dormann},
  title   = {Stable-Baselines3: Reliable Reinforcement Learning Implementations},
  journal = {Journal of Machine Learning Research},
  year    = {2021},
  volume  = {22},
  number  = {268},
  pages   = {1-8},
  url     = {http://jmlr.org/papers/v22/20-1364.html}
}
\bibliographystyle{icml2026}

\newpage
\appendix
\onecolumn

\section{Algorithm Pseudocode}
\label{app:algorithm}
\begin{algorithm}[h!]
\caption{Zero-Shot Action Selection via LLM-Conditioned Imagination}
\label{alg:llm_magik}
\begin{algorithmic}
\STATE {\bfseries Input:} Target observation $s_t$, VLM $V(\cdot)$, LLM $M_{LLM}(\cdot, \text{context})$, Text Encoder $E_{text}(\cdot)$, VAE Encoder $q_{\phi}(z|x)$, VAE Decoder $p_{\theta}(x|z, e_d)$, Pre-trained source policy $\pi_{source}(a|s)$
\STATE {\bfseries Output:} Action $a$ to execute in the target environment

\STATE // Step 1 \& 2: Observe and Describe Target State
\STATE $c_t \gets V(s_t)$

\STATE // Step 3: Semantic Manipulation via LLM
\STATE $c_{t \to s} \gets M_{LLM}(c_t, \text{context})$ \COMMENT{Get source-aligned description}

\STATE // Step 4: Encode Semantic-Orthogonal Features
\STATE $z \sim q_{\phi}(z|s_t)$

\STATE // Step 5: Generate Imagined State
\STATE $e_{c_{t \to s}} \gets E_{text}(c_{t \to s})$ \COMMENT{Embed source caption}
\STATE $s_{imagined} \sim p_{\theta}(x|z, e_{c_{t \to s}})$ \COMMENT{Reconstruct with $z$ and source text embedding}

\STATE // Step 6: Execute Source Policy
\STATE $a \sim \pi_{source}(a | s_{imagined})$

\STATE \textbf{return} $a$
\end{algorithmic}
\end{algorithm}

\section{LLM \& VLM Prompts}
\label{app:llm_details}

In this section, we provide the specific prompts and context used for the captioning module and the LLM semantic operator $M_{LLM}$.

\subsection{System Prompt: Observation Captioning}
\begin{verbatim}
You are a high-precision computer vision annotator for the MiniWorld 3D environment
for RL research. Your task is to convert low-resolution visual observations into
structured, spatially accurate text descriptions.

### 1. OBJECT IDENTIFICATION
The environment contains interactable objects of various types and colors.
* **Standard Objects:** Common examples include "Blue Box," "Green Ball," "Yellow
  Duckie," "Blue Box," "Medkit," and "Key."
* **General Rule:** If you see an object that is not in the list above, describe
  it using **[Color] + [Shape]** (e.g., "Purple Cylinder," "White Cube").
* **Multiple Objects:** If multiple objects are present, list them all (e.g., "A
  Blue Box and a Green Ball").

### 2. ENVIRONMENT & SKY
* **Walls & Floor:** Describe the specific color and texture visible (e.g., "Gray
  concrete wall," "Green grass floor").
* **Sky:** Include the phrase **"Blue sky"** ONLY if the sky is visible. If the
  wall obscures the sky completely (e.g., close-up view), do not mention the sky.

### 3. SENSOR DATA INTEGRATION
You will be provided with "Additional Sensor Data (Ground Truth)".
* **MANDATORY:** You MUST include the exact **distance** and **angle** numeric
  values provided in the sensor data for every visible object.
* **Direction:** Explicitly state the direction (e.g., "to the left", "to the
  right") as given.

**Constraints:**
* **NO META-COMMENTARY:** NEVER use the phrase "sensor data" or "indicated by
  sensor data" in the output caption. The sensor data is context for YOU, not text
  to be quoted.
* **Length Limit:** Keep captions concise and small. For empty scenes, use a
  sentence describing the environment and state no objects are visible.
* If no objects are visible, do NOT justify why. simply state the layout of the
  scene (Wall, floor and sky if available) and mention that there is no objects.
* All the image will contain a wall and the floor. Caption should contain a
  description of the wall and the floor. Remember caption should be small and concise.
* Give extra emphasis on the objects in the sensor data. ensure they are properly
  described and the distance and angle are correctly mentioned.
* DO NOT use any starting phrases such as "The image depicts/shows".
\end{verbatim}

\subsection{System Prompt: Semantic Alignment (Source Description)}
We used the same system prompt across all environments. The system prompt for $M_{LLM}$ is provided below:

\begin{verbatim}
You are an imagination reasoning assistant that simulates how a reinforcement learning
(RL) agent mentally hallucinates its observation to reuse known skills and solve new
target tasks.

The user prompt will contain the following:

\end{verbatim}
\vspace{-1em}
\noindent
$\left.
\begin{array}{@{}l@{}}
\text{\texttt{1. A brief about the environment.}} \\
\text{\texttt{2. What agent knows.}} \\
\text{\texttt{3. What is the target task?}} \\
\text{\texttt{4. A description of the current observation}}
\end{array}
\right\} \text{This is the Context, } \mathcal{C}$
\vspace{-0.5em}
\begin{verbatim}

###ROLE AND PURPOSE

You must:
1. Show detailed reasoning — step-by-step interpretation of how imagination occurs.
2. Output a final JSON containing the transformed or unchanged scene description.

You are not just answering but *solving the target task through reasoning* —
explain what changes are needed, why, and finally give the exact JSON result.

###HOW IMAGINATION WORKS

- The agent performs strictly its known skills (source tasks).
- When the target task differs (Agent absolutely can't perform the target task
  even if there is a subtle differences such as object type/color or background.
  Give extra emphasis on the differences between source and target and map the
  target to source wherever required), the agent imagines — mentally alters its
  perception of the scene so the new goal is solvable using the same skill.
- If you determine that there are changes between the target and source and
  imagination would be necessary, make the changed description as close as
  possible to the source task.
- Imagination occurs strictly in **observation space**, not the physical world.

1. **Minimal Transformation**
   - Modify only what is necessary to make the target solvable.
   - Preserve spatial layout, geometry, and environment details.

2. **Affordance Reasoning**
   - If two objects afford the same action (e.g., pick, push, open), they can be substituted.
   - Example:
     - Known: "pick red ball"
     - Target: "pick green ball"
     → Imagine the green ball as red. This will direct the agent to pick the
       red ball, as it is a known skill. In reality, it is picking the green
       ball.
   - If the target task requires dealing with objects that the agent doesn't
     know, the agent must *imagine* that object as the closest recognizable
     object it knows provided affordance are matching.

3. **Multi-Object or Sequential Tasks**
   - Give proper attention to the spatial position if there are multiple object.
     Don't mix up the positions.

4. **No Fabrication**
   - Never add or invent objects or properties not present in the input scene.

5. **Realism and Consistency**
   - Maintain the original tone, structure, style, and spatial wording. Do not
     invent additional texts.
   - Modify only essential object properties (color, shape, size, etc.).


###REASONING STYLE

- Think step-by-step, like a human reasoning through perception.
- Explain:
  1. What the agent currently knows.
  2. What the target task is.
  3. What are the differences between the target and the source task.
  4. How to map such differences to the source task.
  5. Whether the subtask division is necessary.
  6. What is visible in the current scene?
  7. What minimal changes are needed and why?
   8. Can the minimal changes help the agent to solve the target task by
      mentally imagining an altered scene?. Check against each of the subtasks.
      Discard if the change doesn't affect the subtask.
- Use clear, causal reasoning before outputting the JSON.
- Stop reasoning once the decision is made.


###OUTPUT FORMAT

After reasoning, always output only valid JSON in this format:

{
  "imagine": true | false,
  "description": "<rewritten or unchanged scene description>"
}

- "imagine": true → if imagination was applied to alter the scene.
- "imagine": false → if no change was necessary or possible.
- "description": the final complete and realistic scene.
- Do **not** include extra commentary, code, or markdown after the JSON.
\end{verbatim}

\subsection{Context Construction}
The context $\mathcal{C}$ provided to the LLM consists of four main components:
\begin{enumerate}
    \item A brief description of the environment dynamics and rules.
    \item The source task description (what the agent knows).
    \item The target task description.
    \item A description of the current observation.
\end{enumerate}

\textbf{Example (MiniWorld):}
\begin{itemize}
    \item \textbf{Environment Context:}
    \begin{verbatim}
- The agent operates in a partially observable 3D gridworld-like room.
  The agent sees a portion of the room.
- At the start of each episode, the agent and objects are randomly
  initialised in the environment.
- The agent can perform the following actions: rotate left/right,
  move forward/backward, and pick up objects.
- The environment may contains different objects of different color.
- The agent task is to pick/avoid the objects according to the mission string.
- Since, the observation is partial, the agent can explore the environment
  by moving around to find the objects to pick.
- Once one object is picked, the object dissapears from the scene and it is
  added to agent's inventory, which it can hold forever. This doesn't
  prevent picking another object later.
- The agent can store multiple objects in it's inventory.
- Non-interactive elements (walls, floor, background) cannot be acted upon.
- The agent receives a reward upon successfully completing the Target task
  (for example, picking the specified object from specified room).
- Each episode ends once the Target task is completed or a maximum
  step limit is reached.
    \end{verbatim}
    \item \textbf{What Agent Knows:} ``Pick the blue box and avoid green ball from the room with grass floor and concrete wall''.
    \item \textbf{Target Task:} ``Pick yellow duckie and avoid green ball from the room with grass floor and concrete wall''.
    \item \textbf{Current Observation Description:} ``A yellow duckie is visible at 1.0 units and 34.3 degrees to the left, and a green ball is visible at 3.1 units and 32.9 degrees to the left, both located on a green grass floor surrounded by grey walls under a blue sky.''
\end{itemize}
\subsection{Query Examples}
Below are examples of how the LLM transforms target observation descriptions into source-aligned descriptions to enable zero-shot transfer. The reasoning shown is a summary of the LLM's internal thought process.

\textbf{Example 1: Unseen Object Adaptation}
\begin{itemize}
    \item \textbf{Input:} ``A yellow duckie is visible at 1.0 units and 34.3 degrees to the left, and a green ball is visible at 3.1 units and 32.9 degrees to the left, both located on a green grass floor surrounded by grey walls under a blue sky.''
    \item \textbf{Output:} ``A \textbf{blue box} is visible at 1.0 units and 34.3 degrees to the left, and a green ball is visible at 3.1 units and 32.9 degrees to the left, both located on a green grass floor surrounded by grey walls under a blue sky.''
    \item \textit{Reasoning Step:} The agent reimagines the target object (yellow duckie) as the source reward object (blue box) to trigger the correct pickup policy. The environment context remains unchanged as it matches the source.
\end{itemize}

\textbf{Example 2: Distractor Persistence}
\begin{itemize}
    \setlength{\itemsep}{0pt}
    \setlength{\parskip}{0pt}
    \item \textbf{Input:} ``A green ball is visible at a distance of 4.2 units and an angle of 24.6 degrees to the left, located on a green grass floor surrounded by gray walls under a blue sky.''
    \item \textbf{Output:} ``A green ball is visible at a distance of 4.2 units and an angle of 24.6 degrees to the left, located on a green grass floor surrounded by gray walls under a blue sky.'' (No Change)
    \item \textit{Reasoning Step:} No semantic change is needed because the detected object (green ball) matches the source distractor, which the agent is already trained to avoid.
\end{itemize}

\textbf{Example 3: Complex Environmental \& Semantic Shift}
\begin{itemize}
    \setlength{\itemsep}{0pt}
    \setlength{\parskip}{0pt}
    \item \textbf{Input:} ``A yellow duckie is visible at 1.0 units and 34.3 degrees to the left, and a blue box is visible at 3.1 units and 32.9 degrees to the left, both located on a wodden floor surrounded by brick walls under a blue sky.''
    \item \textbf{Output:} ``A \textbf{blue box} is visible at 1.0 units and 34.3 degrees to the left, and a \textbf{green ball} is visible at 3.1 units and 32.9 degrees to the left, both located on a \textbf{green grass floor} surrounded by \textbf{grey walls} under a blue sky.''
    \item \textbf{Reasoning Step:} This complex mapping requires three simultaneous adjustments:
    \begin{enumerate}
        \item Remapping the target reward (yellow duckie) to the source reward (\textbf{blue box}).
        \item Remapping the old source reward (blue box), which is now a distractor, to a known distractor (\textbf{green ball}) to ensure avoidance.
        \item Hallucinating the environment textures (wooden floor/brick walls) back to the training environment (\textbf{grass floor/concrete walls}) to ensure feature consistency.
    \end{enumerate}
\end{itemize}

\section{Structured vs. Unstructured Captions}
\label{app:query_types}

\subsection{Structured Captions}
The structured query is template filling. For example, in the MiniWorld environment:
\begin{quote}
The agent is in a room with a grass floor $<$floor description$>$ and concrete walls $<$wall description$>$. A blue box $<$object1$>$ is found to the left at angle 10 at a distance of 3.5 units $<$location1$>$. A green ball $<$object2$>$ is found to the right at angle 22.7 at a distance of 2.5 units $<$location2$>$.
\end{quote}
If no objects are visible:
\begin{quote}
The agent is in a room with grass floor and concrete walls. No objects are visible in the current view.
\end{quote}

\subsection{Unstructured Captions}
The unstructured query varies significantly. Here are a few examples:
\begin{itemize}
    \item ``A green ball is visible at a distance of 3.9 units and an angle of 34.2 degrees to the left. It is located on a green grass floor surrounded by gray walls under a blue sky.''
    \item ``The image depicts a room with a green floor and gray walls. The ceiling is blue, indicating a clear sky. No objects are visible in the room.''
    \item ``The image depicts a 3D environment with a gray wall, green grass floor, and a blue sky visible above. A blue box is located at 2.6 units and 0.1 degrees to the right, while a green ball is positioned at 3.5 units and 30.7 degrees to the right within the scene.''
    \item ``An empty room with a green grass floor and gray walls under a blue sky.''
    \item ``A blue box is visible at a distance of 4.1 units and an angle of 6.5 degrees to the left, while a green ball is located at a distance of 3.5 units and an angle of 29.9 degrees to the right. The scene is set on a green grass floor, with a grey wall in the background, under a blue sky.'
\end{itemize}

\section{RL Implementation Details}
\label{app:rl_details}

All Reinforcement Learning (RL) source policies—DQN, PPO, and SAC—were trained using the default hyperparameters and configurations provided by the Stable Baselines3 (SB3) library \cite{stable-baselines3}. No extensive hyperparameter tuning was performed for the source tasks, demonstrating the robustness of the source policies.

\section{Text-Conditioned VAE Architecture and Training}
\label{app:vae_details}

\subsection{Architecture}
Our Text-Conditioned Variational Autoencoder (VAE) is designed to separate visual content into a semantic, text-conditioned component and a spatially-structured latent variable $z$ that captures residual details (e.g., layout, pose, background). The framework consists of four main components: a visual encoder, a text encoder, a conditional decoder, and a set of discriminators.

\noindent \textbf{Visual Encoder.} The encoder $E_\phi$ is a ResNet-style Fully Convolutional Network (FCN). It processes an input image $x \in \mathbb{R}^{3 \times H \times W}$ through a series of downsampling blocks (comprising strided convolution, LayerNorm, and LeakyReLU) followed by three residual blocks. The encoder outputs a spatial feature map parameterizing a diagonal Gaussian distribution $q_\phi(z|x) = \mathcal{N}(\mu(x), \text{diag}(\sigma^2(x)))$, producing a spatial latent tensor $z \in \mathbb{R}^{C_z \times h \times w}$ (where $C_z=8$). Maintaining spatial dimensions allows the latent code to preserve localized visual structure.

\noindent \textbf{Text Encoder.} Textual descriptions are encoded using a frozen pre-trained LongCLIP model (\texttt{LongCLIP-GmP-ViT-L-14}) for unstructured captions. For structured captions, we utilized the standard CLIP model. We extract the last hidden state sequence $t \in \mathbb{R}^{L \times D_t}$ to capture fine-grained semantic details. A learnable MLP adapter projects these embeddings to the decoder's working dimension.

\noindent \textbf{Conditional Decoder.} The decoder $D_\theta(z, t)$ reconstructs the image $\hat{x}$ by progressively upsampling the latent $z$ while conditioning on text $t$. It is composed of hierarchical \textit{Cross-Attention FiLM Spatial Blocks}. In each block:
\begin{enumerate}
    \item \textbf{FiLM Modulation:} The spatial latent $z$ is upsampled to the features' resolution and mapped to affine parameters ($\gamma, \beta$) for Feature-wise Linear Modulation, allowing the latent structure to spatially modulate the features.
    \item \textbf{Cross-Attention:} A Multi-Head Cross-Attention layer allows the visual features to attend to the sequence of text embeddings $t$, injecting semantic information.
\end{enumerate}

\noindent \textbf{Discriminators.} The architecture includes two distinct adversarial modules used during training:
\begin{itemize}
    \item \textbf{Caption Discriminator ($D_\text{text}$):} To ensure the latent $z$ captures only visual information \textit{orthogonal} to the text, we employ an adversarial disentanglement module. This consists of a Multi-Layer Perceptron (MLP) that takes the flattened latent $z$ and attempts to predict the corresponding text embeddings $t$. A Gradient Reversal Layer (GRL) is placed before this discriminator, causing the encoder to learn representations that are invariant to the text (i.e., maximizing the discriminator's loss).
    \item \textbf{Image Discriminator ($D_\text{img}$):} A PatchGAN discriminator is used to enforce photorealism. It operates on local image patches to distinguish between real images $x$ and reconstructions $\hat{x}$, encouraging the decoder to generate high-frequency textures.
\end{itemize}

\subsection{Training Formulation}
The model is trained to minimize a composite objective function. The core foundation is the text-conditioned ELBO:
\begin{equation}
    \mathcal{L}_{VAE}(x, c_s) = \mathbb{E}_{q_{\phi}(z|x)}[\log p_{\theta}(x|z, e_{c_s})] - \beta \cdot D_{KL}(q_{\phi}(z|x)||p(z))
\end{equation}
where $p(z) \sim \mathcal{N}(0, I)$ and $\beta$ is the annealing term.

\noindent \textbf{Adversarial Disentanglement ($\mathcal{L}_{dis}$):} To ensure $z$ is orthogonal to the text, we use a contrastive adversarial loss. A discriminator $D_{\psi}$ predicts the text embedding $e_{c_s}$ from the latent $z$ (via GRL). It minimizes the InfoNCE loss:
\begin{equation}
    \mathcal{L}_{adv}(Z, E) = -\frac{1}{B} \sum_{i=1}^B \log \frac{\exp(\text{sim}(\hat{e}_{c_{s,i}}, e_{c_{s,i}}) / \tau)}{\sum_{j=1}^B \exp(\text{sim}(\hat{e}_{c_{s,i}}, e_{c_{s,j}}) / \tau)}
\end{equation}
where $\hat{e}_{c_{s,i}} = D_{\psi}(\text{GRL}(z_i))$ and $\tau$ is the temperature. The encoder maximizes this loss.

\noindent \textbf{Total Objective:} The full training objective combines these with perceptual and GAN losses for high-fidelity generation:
\begin{equation}
    \mathcal{L} = \mathcal{L}_\text{MSE} + \lambda_\text{LPIPS} \mathcal{L}_\text{LPIPS} + \lambda_\text{KL} \mathcal{L}_\text{KL} + \lambda_\text{dis} \mathcal{L}_{adv} + \lambda_\text{gan} \mathcal{L}_\text{gan}
\end{equation}

\noindent \textbf{Reconstruction ($\mathcal{L}_\text{MSE}$):} We calculate the pixel-wise Mean Squared Error (MSE) between the input $x$ and reconstruction $\hat{x}$ to ensure structural fidelity.

\noindent \textbf{Perceptual Loss ($\mathcal{L}_\text{LPIPS}$):} We optimize a VGG-based LPIPS perceptual loss to capture high-level semantic similarity and textural details that pixel-wise metrics may miss.

\noindent \textbf{KL Divergence ($\mathcal{L}_\text{KL}$):} The posterior is constrained towards a standard normal prior, $D_\text{KL}(q_\phi(z|x) || \mathcal{N}(0, I))$, using cyclical annealing to prevent posterior collapse.

\noindent \textbf{GAN Loss ($\mathcal{L}_\text{gan}$):} The Image Discriminator and Decoder are optimized via a Hinge loss adversarial objective to improve generation quality.

\subsection{Implementation Details}
We train the model using the AdamW optimizer with a OneCycleLR scheduler (max learning rate $2 \times 10^{-4}$, weight decay $0.01$). The implementation utilizes `LongCLIP-GmP-ViT-L-14` as the text backbone for natural scenes, while standard CLIP is used for structured environments. Hyperparameters are set to $\lambda_\text{KL}=1.5$, $\lambda_\text{dis}=15.0$, and $\lambda_\text{gan}=10.0$. Training is performed with a batch size of 32 for 500 epochs.

\section{Additional Qualitative Results}
\label{app:additional_results}
\subsection{Learning curves}
We present the detailed fine-tuning learning curves for the MiniWorld, MiniGrid and Fragile Object Manipulation environments in Figure \ref{fig:miniworld_curves} and \ref{fig:minigrid_curves} respectively. 

\begin{figure*}[h]
    \centering
    \begin{subfigure}[b]{0.32\textwidth}
        \centering
        \includegraphics[width=\textwidth]{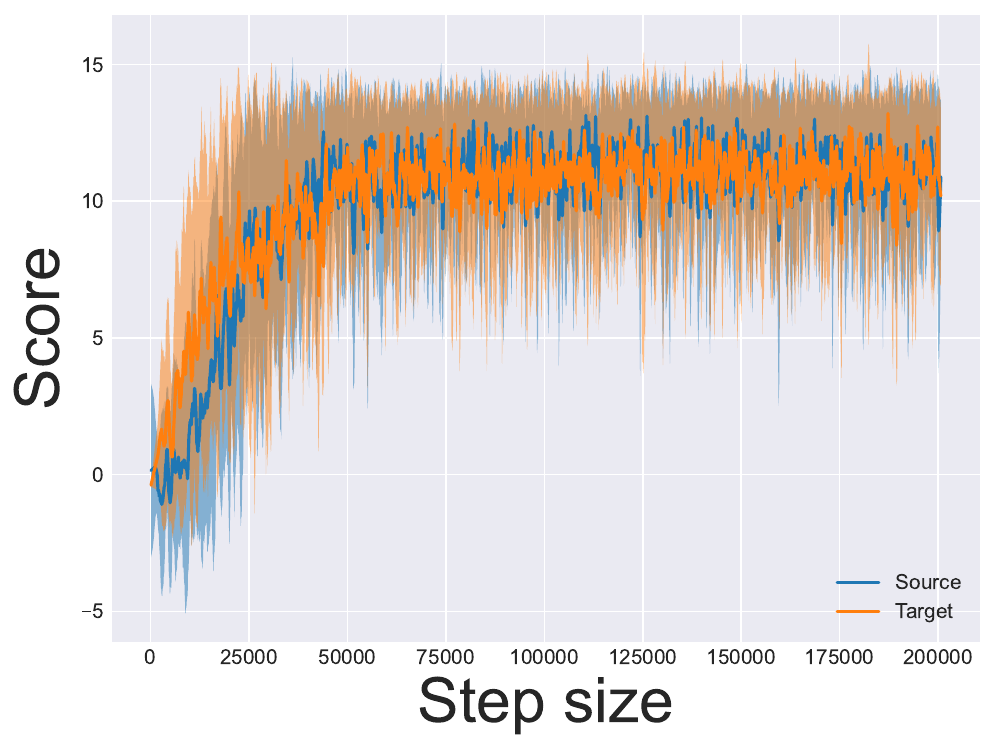}
        \caption{Case 1}
    \end{subfigure}
    \hfill
    \begin{subfigure}[b]{0.32\textwidth}
        \centering
        \includegraphics[width=\textwidth]{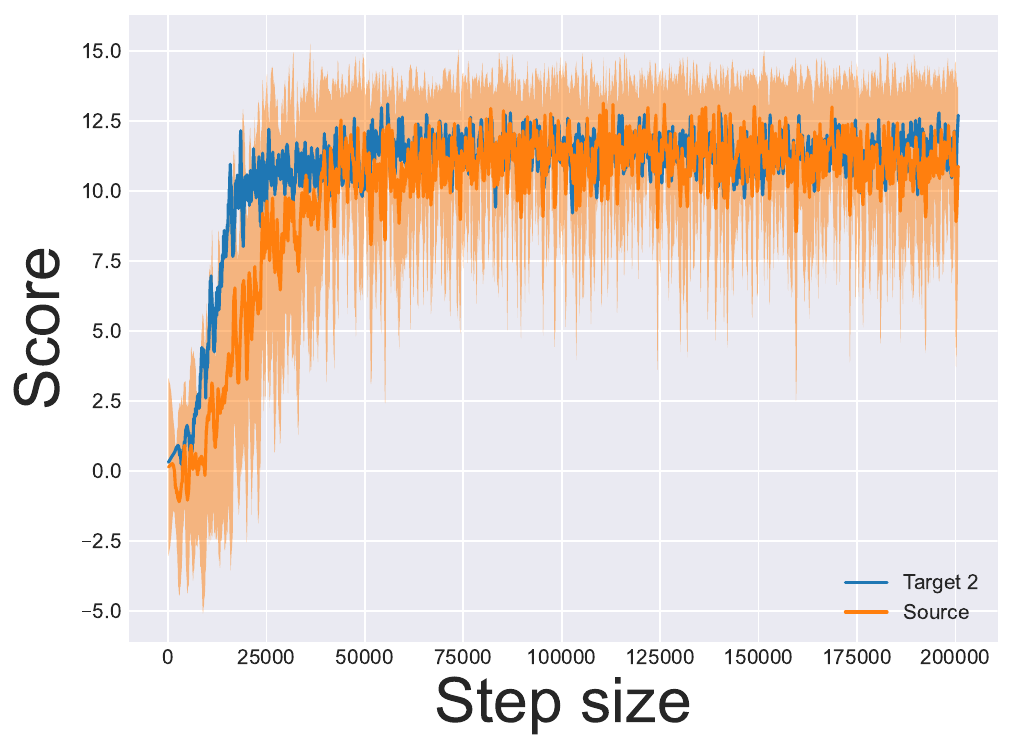}
        \caption{Case 2}
    \end{subfigure}
    \hfill
    \begin{subfigure}[b]{0.32\textwidth}
        \centering
        \includegraphics[width=\textwidth]{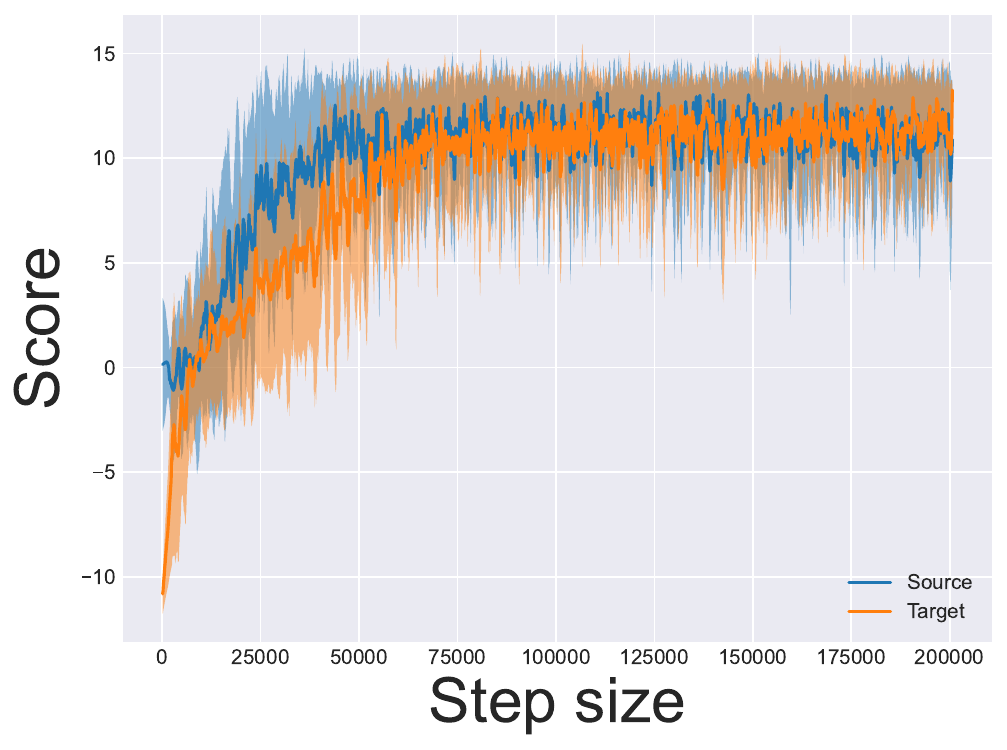}
        \caption{Case 3}
    \end{subfigure}
    \caption{PPO Fine-tuning in MiniWorld in each case.}
    \label{fig:miniworld_curves}
\end{figure*}

\begin{table}[h]
  \caption{Experimental results of ASPECT for MiniWorld environment using unstructured, VLM-generated captions. The method demonstrates robustness to noise and variability in the conditioning text.}
  \label{tab:miniworld_case_noise}
  \begin{center}
    \begin{small}
      \begin{sc}
        \resizebox{0.7\columnwidth}{!}{
        \begin{tabular}{lcc}
          \toprule
          Case & Rewarding Object Picked & Distractor Object Picked \\
          \midrule
          Case 1 & 8.60 $\pm$ 0.50 & 0.00 $\pm$ 0.00 \\
          Case 2 & 8.40 $\pm$ 0.40 & 0.00 $\pm$ 0.00 \\
          Case 3 & 8.60 $\pm$ 0.40 & 0.00 $\pm$ 0.00 \\
          \bottomrule
        \end{tabular}
        }
      \end{sc}
    \end{small}
  \end{center}
  \end{table} 
\label{app:additional_curves}

\begin{figure}[h]
    \centering
    \begin{subfigure}[b]{0.32\textwidth}
        \centering
        \includegraphics[width=\textwidth]{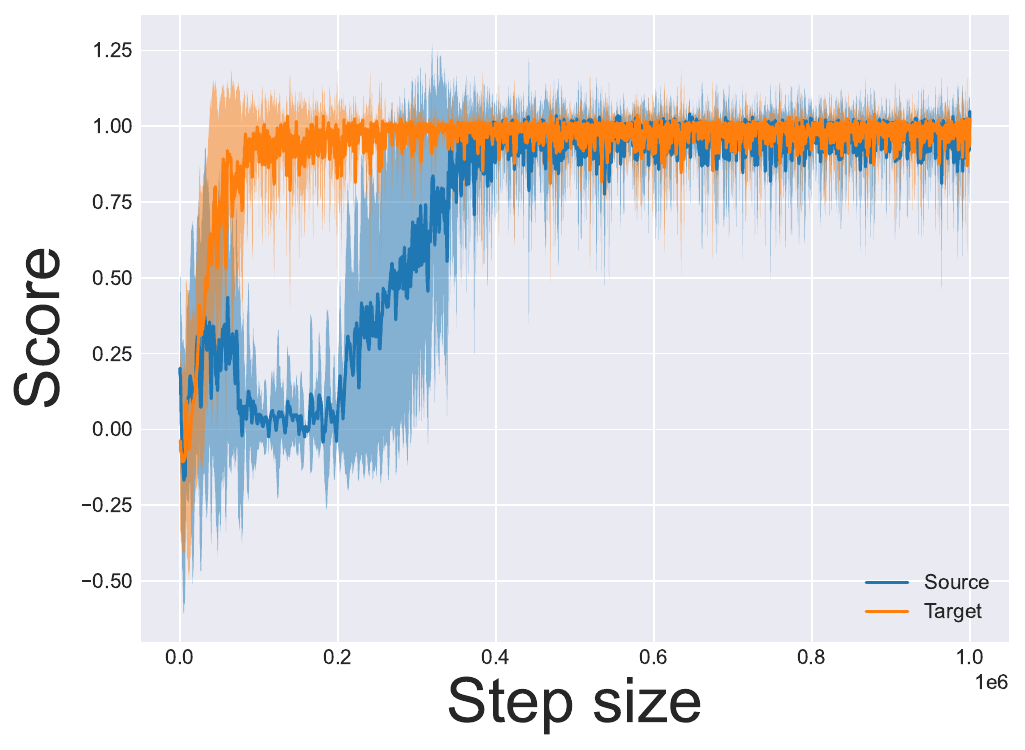}
        \caption{MiniGrid Case 1}
    \end{subfigure}
    \hfill
    \begin{subfigure}[b]{0.32\textwidth}
        \centering
        \includegraphics[width=\textwidth]{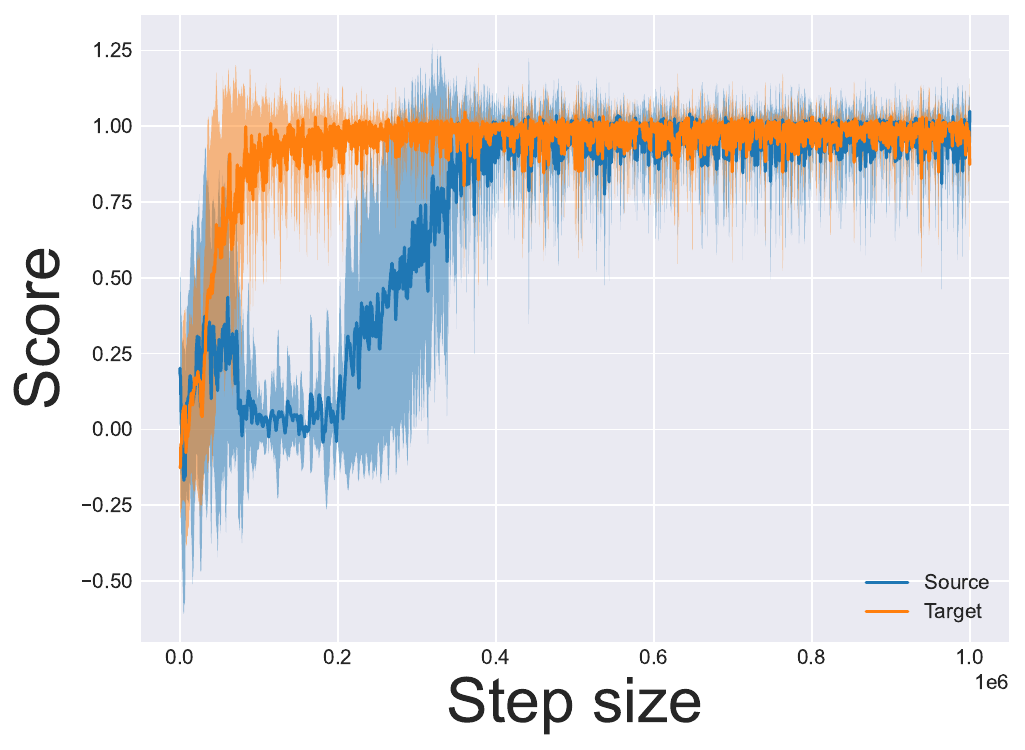}
        \caption{MiniGrid Case 2}
    \end{subfigure}
    \hfill
    \begin{subfigure}[b]{0.32\textwidth}
        \centering
        \includegraphics[width=\textwidth]{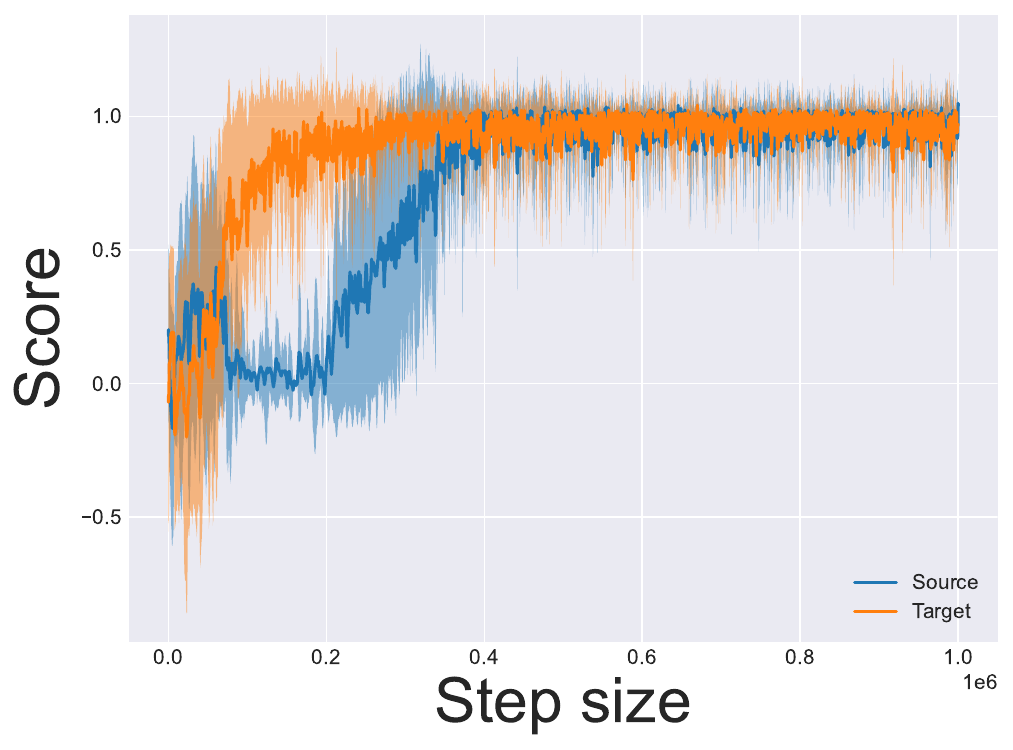}
        \caption{MiniGrid Case 3}
    \end{subfigure}
    \hfill
    \begin{subfigure}[b]{0.32\textwidth}
        \centering
        \includegraphics[width=\textwidth]{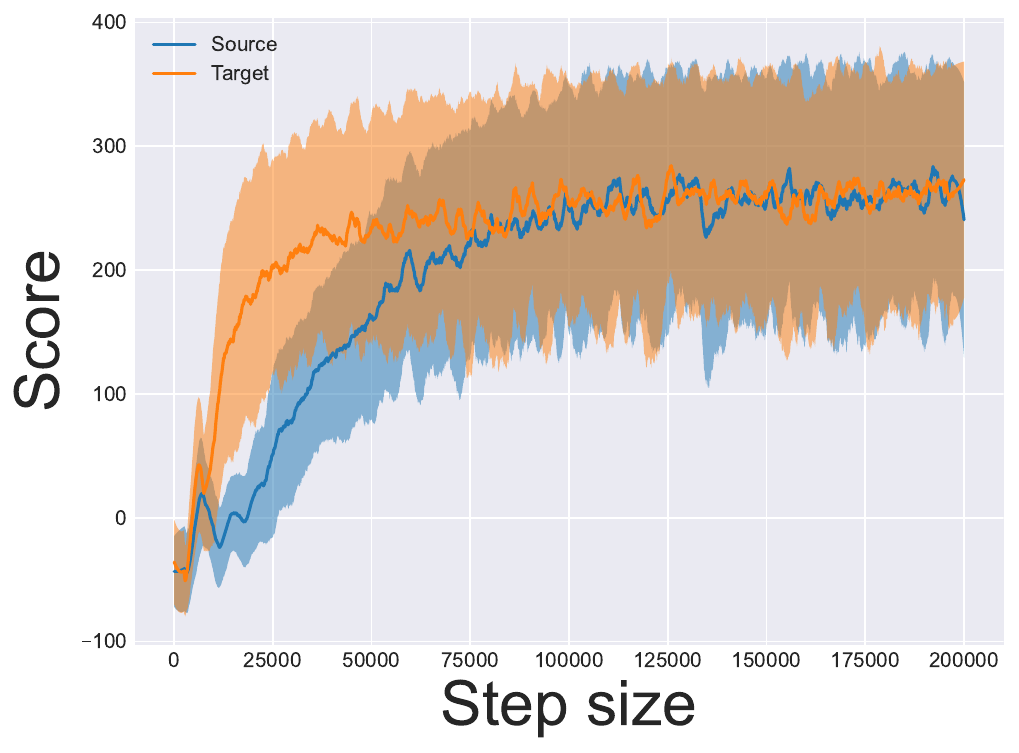}
        \caption{Manipulation}
        \label{fig:manipulation_curve}
    \end{subfigure}
    \caption{Fine-tuning curves for MiniGrid and Fragile Object Manipulation.}
    \label{fig:minigrid_curves}
\end{figure}
\label{app:qualitative}

The learning curve for SF in the MiniGrid environment is shown in Figure \ref{fig:sf_minigrid}. The SF agent is allowed to interact with different environments sequentially. Each dip in the learning curve indicates an environmental change.

\begin{figure}[h]
    \centering
    \includegraphics[width=0.5\textwidth]{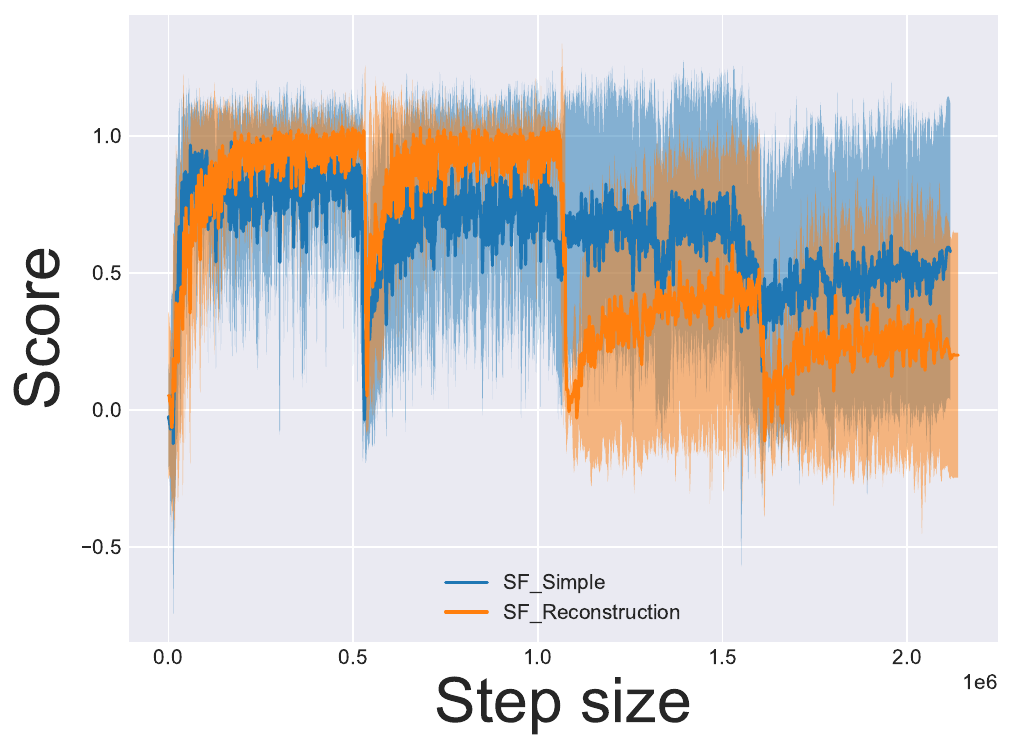}
    \caption{Learning curve for Successor Features (SF) in MiniGrid. The agent interacts with environments sequentially, with performance dips indicating task transfers.}
    \label{fig:sf_minigrid}
\end{figure}

The learning curve for SF in the MiniWorld environment is shown in Figure \ref{fig:sf_miniworld}. As indicated by the reward curve, the agent fails to learn any meaningful policy and the performance never improves across tasks.

\begin{figure}[h]
    \centering
    \includegraphics[width=0.5\textwidth]{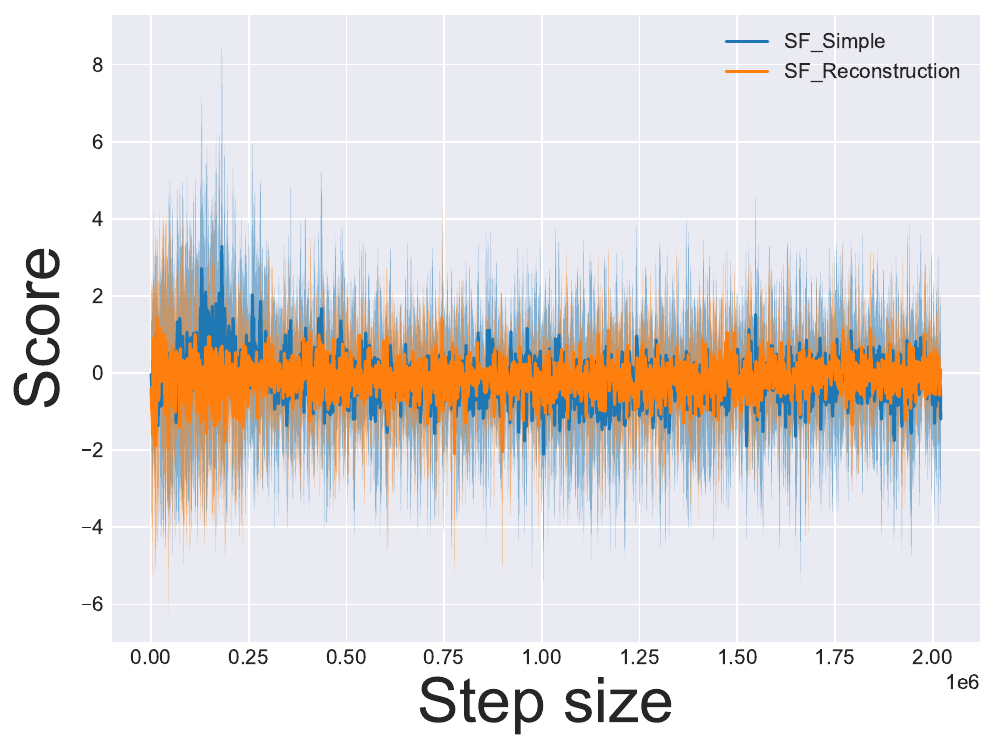}
    \caption{Learning curve for Successor Features (SF) in MiniWorld. The flat reward curve indicates a failure to learn or transfer to the target tasks.}
    \label{fig:sf_miniworld}
\end{figure}

\begin{figure}[h]
    \centering
    \begin{subfigure}[b]{0.23\textwidth}
        \centering
        \includegraphics[width=\textwidth]{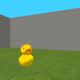}
        \caption{Original Image}
    \end{subfigure}
    \begin{subfigure}[b]{0.23\textwidth}
        \centering
        \includegraphics[width=\textwidth]{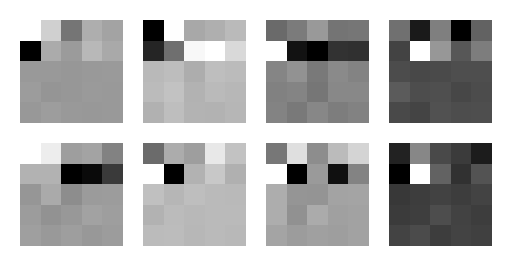}
        \caption{Original Latent}
    \end{subfigure}
    \begin{subfigure}[b]{0.23\textwidth}
        \centering
        \includegraphics[width=\textwidth]{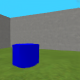}
        \caption{Remapped Image}
    \end{subfigure}
    \begin{subfigure}[b]{0.23\textwidth}
        \centering
        \includegraphics[width=\textwidth]{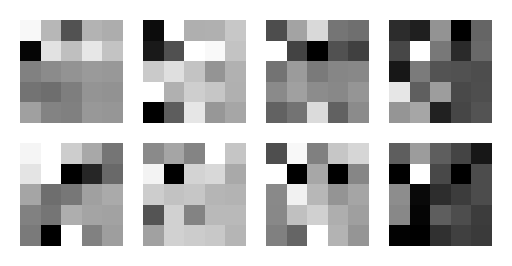}
        \caption{Remapped Latent}
    \end{subfigure}
    \caption{Visualization of spatial latent channels. The first two rows of the latent feature maps remain consistent between (b) and (d), capturing the preserved background structure, while other channels shift to reflect the semantic change from (a) to (c).}
    \label{fig:latent_viz}
\end{figure}

\begin{figure*}[h]
    \centering
    \includegraphics[width=\textwidth, height=0.42\textheight, keepaspectratio]{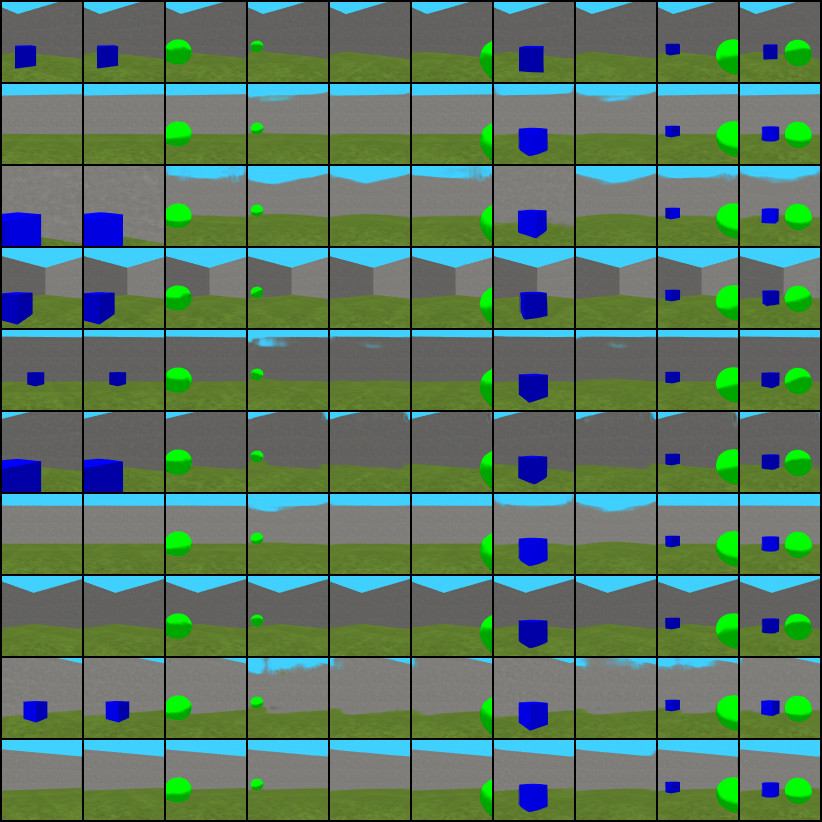}
    \includegraphics[width=\textwidth, height=0.42\textheight, keepaspectratio]{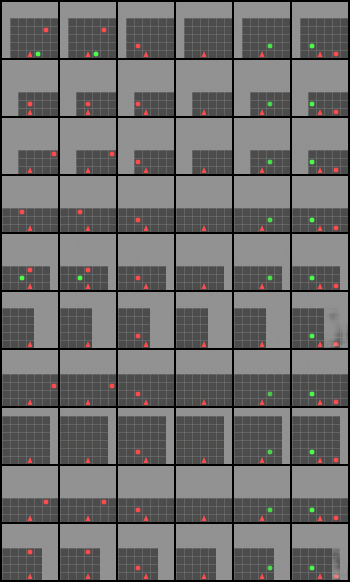}
    \caption{Text-guided image generation demonstrating latent disentanglement. Structural features captured in $z$ remain fixed (preserving the background layout) while the text modifies the semantic appearance. Left: MiniWorld, Right: MiniGrid.}
    \label{fig:remapping}
\end{figure*}

\begin{figure}[h]
    \centering
    \begin{subfigure}[b]{0.18\textwidth}
        \centering
        \textbf{Original}
    \end{subfigure}
    \hfill
    \begin{subfigure}[b]{0.18\textwidth}
        \centering
        \textbf{Imagined}
    \end{subfigure}
    \hfill
    \begin{subfigure}[b]{0.18\textwidth}
        \centering
        \textbf{Original}
    \end{subfigure}
    \hfill
    \begin{subfigure}[b]{0.18\textwidth}
        \centering
        \textbf{Imagined}
    \end{subfigure}
    
    \vspace{0.05cm}

    \begin{subfigure}[b]{0.18\textwidth}
        \centering
        \includegraphics[width=\textwidth]{Figures/frame1.png}
        \caption{Duckie in grass floor and concrete wall}
    \end{subfigure}
    \hfill
    \begin{subfigure}[b]{0.18\textwidth}
        \centering
        \includegraphics[width=\textwidth]{Figures/frame1_remapped.png}
        \caption{Blue box in source setting}
    \end{subfigure}
    \hfill
    \begin{subfigure}[b]{0.18\textwidth}
        \centering
        \includegraphics[width=\textwidth]{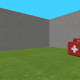}
        \caption{Medkit in grass floor and concrete wall}
    \end{subfigure}
    \hfill
    \begin{subfigure}[b]{0.18\textwidth}
        \centering
        \includegraphics[width=\textwidth]{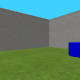}
        \caption{Blue box in source setting}
    \end{subfigure}
    
    \vspace{0.1cm}

    \begin{subfigure}[b]{0.18\textwidth}
        \centering
        \includegraphics[width=\textwidth]{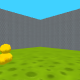}
        \caption{Duckie in slime floor and metal grill wall}
    \end{subfigure}
    \hfill
    \begin{subfigure}[b]{0.18\textwidth}
        \centering
        \includegraphics[width=\textwidth]{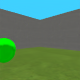}
        \caption{Green ball in source setting}
    \end{subfigure}
    \hfill
    \begin{subfigure}[b]{0.18\textwidth}
        \centering
        \includegraphics[width=\textwidth]{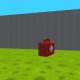}
        \caption{Medkit in slime floor and metal grill wall}
    \end{subfigure}
    \hfill
    \begin{subfigure}[b]{0.18\textwidth}
        \centering
        \includegraphics[width=\textwidth]{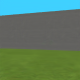}
        \caption{Green ball in source setting}
    \end{subfigure}
    
    \vspace{0.1cm}

    \begin{subfigure}[b]{0.18\textwidth}
        \centering
        \includegraphics[width=\textwidth]{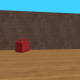}
        \caption{MedKit in wodden floor and brick wall}
    \end{subfigure}
    \hfill
    \begin{subfigure}[b]{0.18\textwidth}
        \centering
        \includegraphics[width=\textwidth]{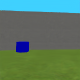}
        \caption{Blue box in source setting}
    \end{subfigure}
    \hfill
    \begin{subfigure}[b]{0.18\textwidth}
        \centering
        \includegraphics[width=\textwidth]{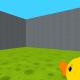}
        \caption{Duckie in slime floor and metal grill wall}
    \end{subfigure}
    \hfill
    \begin{subfigure}[b]{0.18\textwidth}
        \centering
        \includegraphics[width=\textwidth]{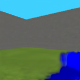}
        \caption{Blue box in source setting}
    \end{subfigure}
    
    \vspace{0.1cm}
    
    \begin{subfigure}[b]{0.18\textwidth}
        \centering
        \includegraphics[width=\textwidth]{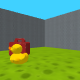}
        \caption{Duckie and MedKit in slime floor and metal grill wall}
    \end{subfigure}
    \hfill
    \begin{subfigure}[b]{0.18\textwidth}
        \centering
        \includegraphics[width=\textwidth]{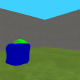}
        \caption{Blue box and Green ball in source setting}
    \end{subfigure}
    \hfill
    \begin{subfigure}[b]{0.18\textwidth}
        \centering
        \includegraphics[width=\textwidth]{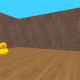}
        \caption{Duckie in wooden floor and brick wall}
    \end{subfigure}
    \hfill
    \begin{subfigure}[b]{0.18\textwidth}
        \centering
        \includegraphics[width=\textwidth]{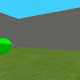}
        \caption{Green ball in source setting}
    \end{subfigure}

    \vspace{0.1cm}
    
    \begin{subfigure}[b]{0.18\textwidth}
        \centering
        \includegraphics[width=\textwidth]{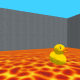}
        \caption{Duckie in lava floor and metal grill wall}
    \end{subfigure}
    \hfill
    \begin{subfigure}[b]{0.18\textwidth}
        \centering
        \includegraphics[width=\textwidth]{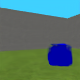}
        \caption{Blue box in source setting}
    \end{subfigure}
    \hfill
    \begin{subfigure}[b]{0.18\textwidth}
        \centering
        \includegraphics[width=\textwidth]{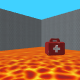}
        \caption{MedKit in lava floor and metal grill wall}
    \end{subfigure}
    \hfill
    \begin{subfigure}[b]{0.18\textwidth}
        \centering
        \includegraphics[width=\textwidth]{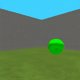}
        \caption{Green ball in source setting}
    \end{subfigure}
    
    \caption{Comparison of real target observations and the corresponding source-aligned imagined states generated by ASPECT. We display 10 examples demonstrating the robust visual remapping capability.}
    \label{fig:imagination_viz}
\end{figure}

\subsection{Disentanglement of Layout and Semantics}
\label{app:disentanglement}
Figure \ref{fig:remapping} demonstrates the text-guided image generation capabilities of our VAE, highlighting the disentanglement between structural and semantic features. The first column displays the original observation, and the second column shows its reconstruction. Subsequent columns show generations conditioned on different text descriptions while keeping the spatial latent $z$ fixed. As observed, the structural layout (e.g., background geometry, walls, floor) remains consistent across all generations, captured by the unchanged latent $z$. Meanwhile, the semantic content (textures, colors, object identities) adapts to the varying text prompts. This confirms that our architecture effectively disentangles the spatial layout (encoded in $z$) from the semantic attributes (controlled by the text embedding).

To further validate this disentanglement, we visualized the spatial latent features. Figure \ref{fig:latent_viz} displays the original and remapped images alongside their corresponding 8-channel spatial latents. Notably, the first two rows of the latent visualization (representing specific feature channels) remain identical between the original and remapped states. These channels correspond to the structural background features, which are preserved by the architecture, while the text-conditioned channels adapt to the new semantic prompts. This visual evidence reinforces that our model successfully isolates background structure from semantic object identity.

\subsection{Imagination in different unseen settings}
\label{sec:unseen_setting}
We provide visual examples of the imagination process in Figure \ref{fig:imagination_viz}. These qualitative results demonstrate how the agent hallucinates the target observation (containing unseen rooms and objects) to match the source task, enabling zero-shot transfer.

\subsection{Failure Cases}
\label{app:failure_cases}

While ASPECT demonstrates robust zero-shot generalization, we identify specific failure modes in the imagination process. Figure \ref{fig:failure_cases} illustrates these cases. Common issues include artefacts in the generated background or object when the target object is positioned too close to the camera, and occasional instances where the remapped object fails to materialize in the imagined scene.

\begin{figure}[h]
    \centering
    \begin{subfigure}[b]{0.35\textwidth}
        \centering
        \textbf{Original}
    \end{subfigure}
    \hspace{0.5cm}
    \begin{subfigure}[b]{0.35\textwidth}
        \centering
        \textbf{Imagined}
    \end{subfigure}
    
    \vspace{0.1cm}

    \begin{subfigure}[b]{0.35\textwidth}
        \centering
        \includegraphics[width=\textwidth]{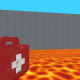}
        \caption{Object too close}
    \end{subfigure}
    \hspace{0.5cm}
    \begin{subfigure}[b]{0.35\textwidth}
        \centering
        \includegraphics[width=\textwidth]{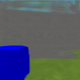}
        \caption{Background artefacts}
    \end{subfigure}
    
    \vspace{0.2cm}

    \begin{subfigure}[b]{0.35\textwidth}
        \centering
        \includegraphics[width=\textwidth]{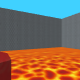}
        \caption{Partial object close up}
    \end{subfigure}
    \hspace{0.5cm}
    \begin{subfigure}[b]{0.35\textwidth}
        \centering
        \includegraphics[width=\textwidth]{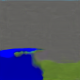}
        \caption{Artefacts in object/background}
    \end{subfigure}
    
    \vspace{0.2cm}

    \begin{subfigure}[b]{0.35\textwidth}
        \centering
        \includegraphics[width=\textwidth]{Figures/frame4.png}
        \caption{Target visible}
    \end{subfigure}
    \hspace{0.5cm}
    \begin{subfigure}[b]{0.35\textwidth}
        \centering
        \includegraphics[width=\textwidth]{Figures/frame4_remapped.png}
        \caption{Remapped object missing}
    \end{subfigure}
    
    \caption{Visualisation of failure cases. The model struggles with extreme close-ups, leading to generation artefacts, and occasionally fails to generate the source object even when the target is visible.}
    \label{fig:failure_cases}
\end{figure}

\end{document}